\patchcmd{\abstract}{\small}{}{}{}
\definecolor{DarkGreen}{rgb}{0.1,0.5,0.1}
\definecolor{DarkRed}{rgb}{0.5,0.1,0.1}
\definecolor{DarkBlue}{rgb}{0.1,0.1,0.5}
\definecolor{Gray}{rgb}{0.2,0.2,0.2}
\definecolor{negvals}{rgb}{0.1,0.5,0.1}
\definecolor{posvals}{rgb}{0.5,0.1,0.1}
\definecolor{neutralvals}{RGB}{255, 140, 0}
\definecolor{neutralvals}{rgb}{0.6, 0.51, 0.48}
\definecolor{posvals}{rgb}{0.8, 0.0, 0.0}
\newtheorem{theorem}{Theorem}[section]
\newtheorem{proposition}[theorem]{Proposition}
\theoremstyle{definition}
\newcommand{\XCal}{\mathcal{X}}
\newcommand{\DCal}{\mathcal{D}}
\newcommand{\YCal}{\mathcal{Y}}
\newcommand{\PreserveBackslash}[1]{\let\temp=\\#1\let\\=\temp}
\newcolumntype{C}[1]{>{\PreserveBackslash\centering}p{#1}}
\newcolumntype{R}[1]{>{\PreserveBackslash\raggedleft}p{#1}}
\newcolumntype{L}[1]{>{\PreserveBackslash\raggedright}p{#1}}
\DeclareMathOperator*{\argmin}{arg\,min}
\DeclareMathOperator*{\E}{\mathbb{E}}
\title{Test-time collective prediction}
\author{
    Celestine Mendler-D\"{u}nner \quad 
    Wenshuo Guo \quad Stephen Bates \quad Michael I.\,Jordan
{\small \{{mendler,\,wguo,\,stephenbates,\,jordan}\}@berkeley.edu}\\ 
 University of California, Berkeley}
\date{}
\begin{document}

\vspace{-0.5cm}

\maketitle

\begin{abstract}
  An increasingly common setting in machine learning involves multiple parties, each with their own data, who want to jointly make predictions on future test points. Agents wish to benefit from the collective expertise of the full set of agents to make better predictions than they would individually, but may not be willing to release their data or model parameters. 
In this work, we explore a decentralized mechanism to make collective predictions at test time, leveraging each agent’s pre-trained model without relying on external validation, model retraining, or data pooling. Our approach takes inspiration from the literature in social science on human consensus-making. We analyze our mechanism theoretically, showing that it converges to inverse mean-squared-error (MSE) weighting in the large-sample limit. To compute error bars on the collective predictions we propose a decentralized Jackknife procedure that evaluates the sensitivity of our mechanism to a single agent's prediction. Empirically, we demonstrate that our scheme effectively combines models with differing quality across the input space. The proposed consensus prediction achieves significant gains over classical model averaging, and even outperforms weighted averaging schemes that have access to additional validation data.

\end{abstract}

\vspace{0.5cm}

\vspace{-0.5cm}
\section{Introduction}

Large-scale datasets are often collected from diverse sources, by multiple parties, and stored across different machines. In many scenarios centralized pooling of data is not possible due to privacy concerns, data ownership, or storage constraints. The challenge of doing machine learning in such distributed and decentralized settings has motivated research in areas of federated learning~\citep{federated15,fed17macmahan}, distributed learning~\citep{distNN12,splitlearning18}, as well as hardware and software design~\citep{spark16,ray18moritz}. 

While the predominant paradigm in distributed machine learning is \emph{collaborative learning} of one centralized model, this level of coordination across machines at the training stage is sometimes not feasible. In this work, we instead aim for \emph{collective prediction} at test time without posing any specific requirement at the training stage. Combining the predictions of pre-trained machine learning models has been considered in statistics and machine learning in the context of ensemble methods, including bagging~~\citep{bagging96breiman} and stacking~\citep{stacking92wolpert}.
In this work, we explore a new perspective on this aggregation problem and investigate whether insights from the social sciences on how humans reach a consensus can help us design more effective aggregation schemes that fully take advantage of each model's individual strength. At a high level, when a panel of human experts come together to make collective decisions, they exchange information and share expertise through a discourse to then weigh their subjective beliefs accordingly. This gives rise to a dynamic process of consensus finding that is decentralized and does not rely on any external judgement. We map this paradigm to a machine learning setting where experts correspond to pre-trained machine learning models, and show that it leads to an appealing mechanism for test-time collective prediction that does not require any data sharing, communication of model parameters, or labeled validation data.
\begin{figure}[t]
    \centering
    \includegraphics[width=0.75\textwidth]{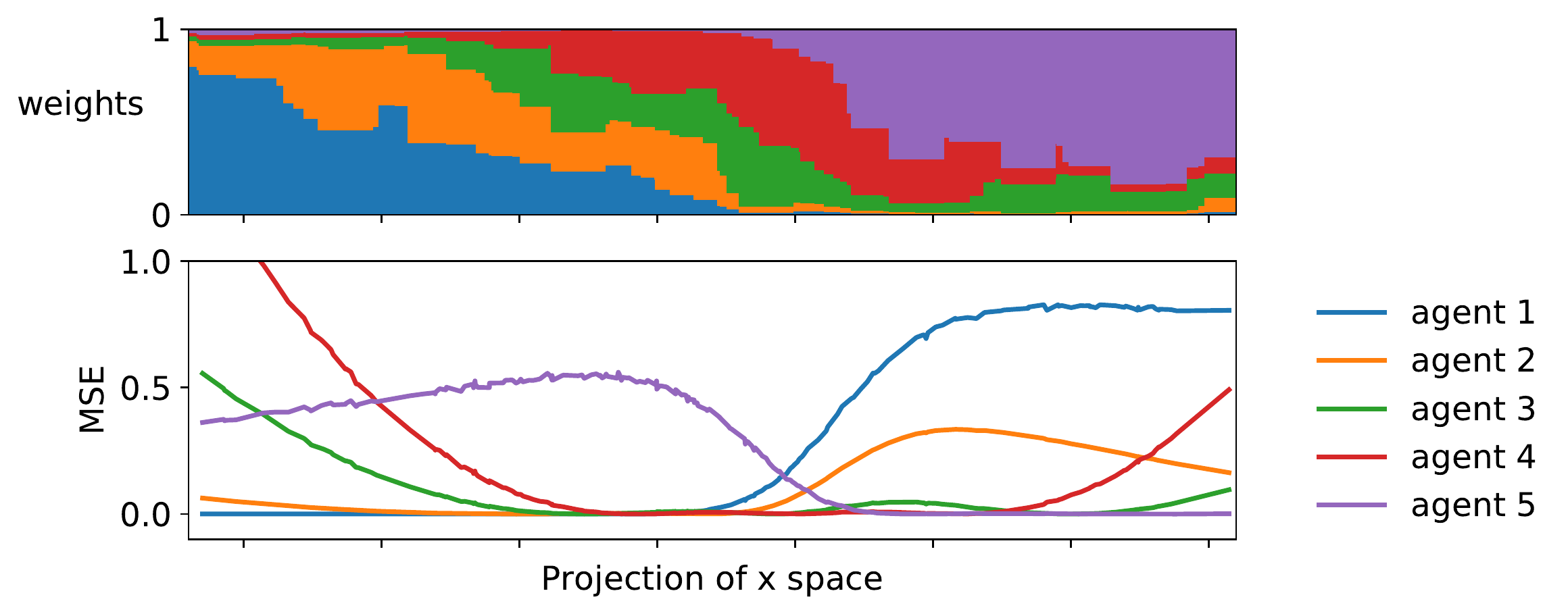} 
    \caption{\emph{(Test-time aggregation of heterogeneous models).} The proposed method upweights more accurate models (lower MSE) adaptively in different regions of the input space. Constructed from a regression task described in Section~\ref{sec:synthetic}.\vspace{-0.2cm}}
    \label{fig:teaser}
\end{figure}

\subsection{Our work}

We build on a model for human consensus finding that defines an iterative process of opinion pooling based on \emph{mutual trust scores} among agents~\citep{degroot1974consensus}. Informally speaking, a mutual trust score reflects an agent's willingness to adapt another agent's opinion. Thus, each individual's impact on the final judgment 
depends on how much it is trusted by the other agents.

Mapping the DeGroot model to our learning scenario, we develop a scheme where each agent uses its own local training data to assess the predictive accuracy of other agents' models and to determine how much they should be trusted. 
This assessment of mutual trust is designed to be adaptive to the prediction task at hand, by using only a subset of the local data in the area around the current test point.
Then, building on these trust scores, a final judgment is reached through an iterative procedure, where agents repeatedly share their updated predictions. 

This aggregation scheme is conceptually appealing as it mimics human consensus finding, and exhibits several practical advantages over existing methods. 
First, it is decentralized and does not require agents to release their data or model parameters. This can be beneficial in terms of  communication costs and attractive from a privacy perspective, as it allows agents to keep their data local and control access to their model. 
Second, it does not require hold-out validation data to construct or train the aggregation function. Thus, there is no need to collect additional labeled data and coordinate shared access to a common database.
Third, it does not require any synchronization across agents during the training stage, so agents are free to choose their preferred model architecture, and train their models independently. 

A crucial algorithmic feature of our procedure is its \emph{adaptivity} to the test point at hand, which allows it to deal with inherent variations in the predictive quality across models in different regions of the input space. 
Figure~\ref{fig:teaser} illustrates how our mechanism adaptively upweights models with lower error depending on the test point's location in the input space. In fact, we prove theoretically that our aggregation procedure can recover inverse-MSE weighting in the large-sample limit, which is known to be optimal for variance reduction of unbiased estimators~\citep{bates69}.

To assess our procedure's sensitivity to the predictions of individual agents we propose a decentralized Jackknife algorithm to compute error bars for the consensus predictions with respect to the collection of observed agents. These error bars offer an attractive target of inference since they are agnostic to how individual models have been trained, can be evaluated without additional model evaluations, and allow one to diagnose the vulnerability of the algorithm to malicious agents.

On the empirical side, we demonstrate the efficacy of our mechanism through extensive numerical experiments across different learning scenarios. In particular, we illustrate the mechanism's advantages over model averaging as well as model selection, and demonstrate that it consistently outperforms alternative non-uniform combination schemes that have access to additional validation data across a wide variety of models and datasets.

\subsection{Related work}

The most common approach for combining expert judgment is to aggregate and average individual expert opinions~\citep{hammitt13}. 
In machine learning this statistical approach of equally-weighted averaging is also known as the ``mean-rule'' and underlies the popular \emph{bagging} technique~\citep{bagging96breiman, buja2006bagging, efron2014estimation}. It corresponds to an optimal aggregation technique for dealing with uncertainty in settings where individual learners are homogeneous and the local datasets are drawn independently from the same underlying distribution. While principled, and powerful due to its simplicity, an unweighted approach cannot account for heterogeneity in the quality, or the expertise, of the base learners. 

To address this issue, performance-weighted averaging schemes have been proposed in various contexts. For determining an appropriate set of weights, these approaches typically rely on calibrating variables~\citep{aspinall10} to detect differences in predictive quality among agents. In machine learning these calibrating variables correspond to labeled cross-validation samples, while in areas such as expert forecasting and risk analysis these variables take the form of a set of agreed-upon seed questions~\citep{MCooke1991uncer}. 
These weights can be updated over time using statistical information about the relative predictive performance~\citep{Rogova2008} or the posterior model probability~\citep{wasserman2000bayesian}.
More recently there have been several applications to classification that focus on dynamic selection techniques \citep{dyna18cruz}, where the goal is to select the locally best classifier from a pool of classifiers on the fly for every test point. At their core, these approaches all rely on constructing a good proxy for local accuracy in a region of the feature space around the test point and to use this proxy to weight or rank the candidate models. Various versions of distance and accuracy metrics have been proposed in this context~\citep[eg.][]{woods97,didaci05,cruz18}.
However, all these methods rely crucially on shared access to labeled validation data to determine the weights.
Similarly, aggregation rules such as \emph{stacking}~\citep{stacking96breiman,NNstacking20} or \emph{meta-models}~\citep{Albardan2020SPOCCSP}, both of which have proved their value empirically, require a central trusted agency with additional data to train an aggregation function on top of the predictor outputs. A related model-based aggregation method, the mixture-of-experts model~\citep{MEM}, requires joint training of all models as well as data pooling. 

To the best of our knowledge there is no approach in the literature that can perform adaptive performance weighting without relying on a central agent to validate the individual models. Notably, our approach differs from general frameworks such as model parameter mixing~\citep{avgmix13zhang}, model fusion~\citep{fusion18, fusion20singh} in that agents are not required to release their models.

Outside machine learning, the question of how to combine opinions from different experts to arrive at a better overall outcome has been studied for decades in risk analysis~\citep{hanes18riskweights,robert99} and management science~\citep{combiningexperts77morris}. 
Simplifying the complex mechanism of opinion dynamics, the celebrated DeGroot model~\citep{degroot1974consensus} offers an appealing abstraction to reason mathematically about human consensus finding. It has been popular in the analysis of convergent beliefs in social networks and it can be regarded as a formalization of the Delphi technique, which has been used to forecast group judgments related to the economy, education, healthcare, and public policy~\citep{Dalkey1972STUDIESIT}. 
Extensions of the DeGroot model have been proposed to incorporate more sophisticated socio-psychological features that
may be involved when individuals interact, such as homophily~\citep{Hegselmann02opiniondynamics}, competitive interactions~\citep{altafini02antagonistic} and stubbornness~\citep{Friedkin11380}.

\section{Preliminaries}

We are given $K$ independently trained models and aim to aggregate their predictions at test time. Specifically, we assume there are $K$ agents, each of which holds a local training dataset, $\DCal_k$, of size $n_k$, and a model $f_k:\XCal \rightarrow \YCal$. We do not make any prior assumption about the quality or structure of the individual models, other than that they aim to solve the same regression tasks. 

At test time, let $x'\in\XCal$ denote the data point that we would like to predict. Each agent $k\in[K]$ has her own prediction for the test point corresponding to $ f_k(x')$. Our goal is to design a mechanism $\mathcal M$ that combines these predictions into an accurate single prediction, $p^\ast(x')=\mathcal M (f_1(x'),...,f_K(x'))$. We focus on the squared loss as a measure of predictive quality.

The requirements we pose on the aggregation mechanism $\mathcal M$ are: (i) agents should not be required to share their data or model parameters with other agents, or any external party; (ii) the procedure should not assume access to additional labeled data for an independent performance assessment or training of a meta-model.

\subsection{The DeGroot consensus model}
\label{sec:DG}

The consensus model proposed by~\citet{degroot1974consensus} specifies how agents in a group influence each other's opinion on the basis of mutual trust. 
Formally, we denote the trust of agent $i$ towards the belief of agent $j$ as $\tau_{ij} \in (0,1]$, with $\sum_{j \in [K]} \tau_{ij} = 1$ for every $i$. Then, DeGroot defines an iterative procedure whereby each agent $i$ repeatedly updates her belief $p_i$ as
\begin{equation}
\label{eq:consensus}
    p_i^{(t+1)}= \sum_{j=1}^K \tau_{ij} \, p_j^{(t)}.
\end{equation}
The procedure is initialized with initial beliefs $p_i^{(0)}$ and run until $p_i^{(t)}=p_j^{(t)}$ for every pair $i,j\in[K]$, at which point a consensus is deemed to have been reached. We denote the consensus belief by $p^\ast$. 

The DeGroot model has a family resemblance to the influential PageRank algorithm~\citep{page1998pagerank} that is designed to aggregate information across webpages in a decentralized manner. Similar to the consensus in DeGroot, the page rank corresponds to a steady state of equation \eqref{eq:consensus}, where the weights $\tau_{ij}$ are computed based on the fraction of outgoing links pointing from page $i$ to page $j$.

\subsection{Computing a consensus}

The DeGroot model can equivalently be thought of as specifying a weight for each agent's prediction in a linear \emph{opinion pool}~\citep{opinionpool61stone}. This suggests another way to compute the consensus prediction.
Let us represent the mutual trust scores in the form of a stochastic matrix, $T = \{\tau_{ij}\}_{i,j=1}^K$, which defines the state transition probability matrix of a Markov chain with $K$ states. The stationary state $w$ of this Markov chain, satisfying $wT=w$, defines the relative weight of each agent's initial belief in the final consensus 
\begin{equation}
    p^\ast = \sum_{j=1}^K w_j \, p_j^{(0)}.
\end{equation} The Markov chain is guaranteed to be irreducible and aperiodic since all entries of $T$ are positive. Therefore, a unique stationary distribution $w$ exists, and consensus will be reached from any initial state~\citep{degrootconvergence77}. We refer to the weights $w$ as \emph{consensus weights}.

To compute the consensus prediction $p^\ast$ one can solve the eigenvalue problem numerically for $w$ via power iteration. Letting $T^\ast=\lim_{m\rightarrow \infty}T^m $, the rows of $T^\ast$ identify, and correspond to the weights $w$ that DeGroot assigns to the individual agents~\citep[Theorem~5.6.6]{durrett2019probability}. The consensus is computed as a weighted average over the initial beliefs.

\section{DeGroot aggregation for collective test-time prediction}

In this section we discuss the details of how we implement the DeGroot model outlined in Section~\ref{sec:DG} for defining a mechanism $\mathcal M$ to aggregate predictions across machine learning agents.
We then analyze the resulting algorithm theoretically, proving its optimality in the large-sample limit, and we outline how the sensitivity of the procedure with respect to individual agents can be evaluated in a decentralized fashion using a jackknife method. 

\subsection{Local cross-validation and mutual trust score} 
A key component of DeGroot's model for consensus finding is the notion of mutual trust between agents, built by information exchange in form of a discourse. Given our machine learning setup with $K$ agents each of which holds a local dataset and a model $f_k$, we simulate this discourse by allowing query access to predictions from other models. This enables each agent to validate another agent's predictive performance on its own local data. This operationalizes a proxy for trustworthiness.

An important requirement for our approach is that it should account for model heterogeneity, and define trust in light of the test sample $x'$ we want to predict. Therefore, we design an \emph{adaptive} method, where the mutual trust $\tau_{ij}$ is reevaluated for every query via a local cross-validation procedure. Namely, agent $i$ evaluates all other agents' predictive accuracy using a subset $\DCal_i(x')\subseteq \DCal_i$ of the local data points that are most similar to the test point $x'$~\citep[cf.][]{woods97}. More precisely, agent $i$ evaluates
\begin{equation}
    \text{MSE}_{ij}(x')= \frac{1}{|\DCal_i(x')|} \sum_{(x,y) \in \DCal_i(x')}({f}_j(x)- y)^2,
    \label{eq:localacc}
\end{equation}
locally for every agent $j\in[K]$, and then performs normalization to obtain the trust scores:
\begin{equation}
    \tau_{ij}=\frac{1/\text{MSE}_{ij}}{\sum_{j\in[K]} 1/\text{MSE}_{ij}}.
    \label{eq:taunormalize}
 \end{equation}

There are various ways one can define the subset $\DCal_i(x')$; see, e.g.,~\citet{dyna18cruz} for a discussion of relevant distance metrics in the context of dynamic classifier selection. 
Since we focus on tabular data in our experiments we use Euclidean distance and assume $\DCal_i(x')$ has fixed size. Other distance metrics, or alternatively a kernel-based approach, could readily be accommodated within the DeGroot aggregation procedure.

\begin{algorithm}[!t]
\caption{DeGroot Aggregation}\label{alg:DG}
\begin{algorithmic}[1]
\STATE \textbf{Input:} $K$ agents with pre-trained models $f_1, \cdots, f_K$ and local data $\DCal_k, k \in [K]$; \\ neighborhood size $N$; test point $x'$.\\[1ex]
\STATE \underline{construct trust scores} (based on local accuracy):\\[1ex]
\FOR{$i=1,2,\ldots, K$}
\STATE Construct local validation dataset $\DCal_i(x')$ using $N$-nearest neighbors of $x'$ in $\DCal_i$.
\STATE Compute accuracy $\text{MSE}_{ij}(x')$ of other agents $j=1\ldots,K$ on $\DCal_i(x')$ according to \eqref{eq:localacc}.
\STATE Evaluate local trust scores $\{\tau_{ij}\}_{j\in[K]}$ by normalizing $\text{MSE}_{ij}$ as in \eqref{eq:taunormalize}.
\ENDFOR\\[1ex]
\STATE \underline{find consensus:}\\
\STATE Run pooling iterations \eqref{eq:consensus}, initialized at $p_j^{(0)}=f_j(x')$  $\forall j$, until a consensus $p^\ast$ is reached.\\
\STATE \textbf{Return:} Collective prediction $p^*(x')=p^\ast$\\[1ex]
\end{algorithmic}
\end{algorithm}

\subsection{Algorithm procedure and basic properties}

We now describe our overall procedure. We take the trust scores from the previous section, and use them to aggregate the agents' predictions via the DeGroot consensus mechanism. See Algorithm~\ref{alg:DG} for a full description of the DeGroot aggregation procedure. In words, after each agent evaluates her trust in other agents, she repeatedly  pools their updated predictions using the trust scores as relative weights. The consensus to which this procedure converges is returned as the final collective prediction, denoted $p^\ast(x')$. In the following we discuss some basic properties of the consensus prediction found by DeGroot aggregation.

In the context of combining expert judgement, the seminal work by ~\cite{lehrer81} on social choice theory has defined \emph{unanimity} as a general property any such aggregation function should satisfy. Unanimity requires that  when all agents have the same subjective opinion, the combined prediction should be no different. Algorithms~\ref{alg:DG} naturally satisfies this condition.

\begin{proposition}[Unanimity]\label{prop:unanimity}
If all agents agree on the prediction, then the consensus prediction from Algorithm~\ref{alg:DG} agrees with the prediction of each agent: $p^*(x') = f_i(x')$ for every $i \in [K]$.
\end{proposition}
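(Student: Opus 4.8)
The plan is to reduce everything to the single structural fact that the trust matrix $T=\{\tau_{ij}\}_{i,j=1}^K$ is row-stochastic, i.e.\ $\sum_{j\in[K]}\tau_{ij}=1$ for every $i$, which holds by construction of the normalization step \eqref{eq:taunormalize}. Write $c:=f_1(x')=\cdots=f_K(x')$ for the common prediction. The argument will not use the particular values of the $\tau_{ij}$ at all, only that each row sums to one; in this sense unanimity is insensitive to how trust is defined.

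First I would argue directly from the pooling iteration \eqref{eq:consensus}, which is initialized at $p_j^{(0)}=f_j(x')=c$ for all $j$. I claim $p_i^{(t)}=c$ for all $i\in[K]$ and all $t\ge 0$, by induction on $t$. The base case is the initialization. For the inductive step, if $p_j^{(t)}=c$ for every $j$, then
\[
p_i^{(t+1)}=\sum_{j=1}^K\tau_{ij}\,p_j^{(t)}=c\sum_{j=1}^K\tau_{ij}=c.
\]
Hence every iterate equals the constant vector; a consensus holds already at $t=0$, the stopping condition $p_i^{(t)}=p_j^{(t)}$ is met, and Algorithm~\ref{alg:DG} returns $p^*(x')=c=f_i(x')$ for every $i$.

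Alternatively — and this is the cleaner route I would present — I would invoke the characterization from Section~\ref{sec:DG} that the consensus satisfies $p^*=\sum_{j=1}^K w_j\,p_j^{(0)}$, where $w$ is the unique stationary distribution of $T$ (which exists since $T$ is irreducible and aperiodic, as noted there). Since $w$ is a probability vector, $\sum_j w_j=1$, so $p^*=\sum_j w_j c=c$. Either way the proof is a one-liner once row-stochasticity is noted.

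There is essentially no obstacle; the only thing worth checking is that $T$ is a genuine stochastic matrix in the first place — equivalently, that each $\DCal_i(x')$ is nonempty and the $\text{MSE}_{ij}(x')$ are strictly positive so that \eqref{eq:taunormalize} is well-defined (or that one fixes a tie-breaking convention when some $\text{MSE}_{ij}(x')=0$). This is already part of the standing setup, so I would simply state the row-stochastic property, give the two-line induction (or the stationary-distribution computation), and conclude.
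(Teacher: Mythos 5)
Your primary argument --- induction on the pooling step $t$, using only that each row of $T$ sums to one --- is exactly the paper's proof, and it is correct. The alternative via the stationary distribution $w$ being a probability vector is a fine equivalent one-liner, but the substance matches the paper's approach.
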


In addition, our algorithm naturally preserves a global ordering of the models, and the consensus weight of each agent is bounded from below and above by the minimal and maximal trust she receives from any of the other agents.

\begin{proposition}[Ranking-preserving]\label{prop:monotone-row}
Suppose all agent rankings have the same order: for all $j_1, j_2 \in [K]$, if $\tau_{ij_1} \geq \tau_{ij_2}$ for some $i \in [K]$, then $\tau_{i'j_1} \geq \tau_{i'j_2}$ for all $i' \in [K]$. 
Then, Algorithm~\ref{alg:DG} finds a consensus $p^*=\sum_i w_i f_i(x')$ where for pairs $j_1, j_2$ such that $\tau_{ij_1} \geq \tau_{ij_2}$, we have
\begin{equation*}
    w_{j_1} \ge w_{j_2}.
\end{equation*}
\end{proposition}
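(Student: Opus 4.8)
The plan is to work with the stationary-distribution characterization of the consensus weights: $w$ is the unique left eigenvector of the stochastic matrix $T = \{\tau_{ij}\}$ with eigenvalue $1$, normalized so $\sum_i w_i = 1$, and $w = \lim_m w^{(0)} T^m$ for any initial distribution. Fix a pair $j_1, j_2$ with $\tau_{ij_1} \ge \tau_{ij_2}$ for every $i$ (by the hypothesis this ordering is simultaneous across all rows $i$, so this is well-defined). I want to show $w_{j_1} \ge w_{j_2}$. Writing out the fixed-point equation coordinatewise,
\begin{equation*}
    w_{j_1} = \sum_{i=1}^K w_i \, \tau_{i j_1}, \qquad w_{j_2} = \sum_{i=1}^K w_i \, \tau_{i j_2},
\end{equation*}
and subtracting gives
\begin{equation*}
    w_{j_1} - w_{j_2} = \sum_{i=1}^K w_i \, (\tau_{i j_1} - \tau_{i j_2}).
\end{equation*}
Since $w_i \ge 0$ for all $i$ and $\tau_{ij_1} - \tau_{ij_2} \ge 0$ for all $i$ by hypothesis, every term in the sum is nonnegative, hence $w_{j_1} - w_{j_2} \ge 0$. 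That is the whole argument; it is essentially a one-line consequence of the fixed-point equation once the weights are identified with the stationary distribution.

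The one genuine point to nail down — and what I expect to be the only real obstacle — is justifying that the consensus weights actually satisfy $wT = w$ with $w \ge 0$. This is already supplied in the preliminaries: because all entries of $T$ are strictly positive, the induced Markov chain is irreducible and aperiodic, so by the cited convergence results a unique stationary distribution $w$ exists, it is entrywise nonnegative (in fact strictly positive), and the DeGroot iteration converges to the weighted average $p^\ast = \sum_j w_j p_j^{(0)}$ with these same weights. So I would open the proof by recalling this identification, then perform the subtraction above. I should also remark that the hypothesis "all agent rankings have the same order" is exactly what makes $\tau_{ij_1} - \tau_{ij_2}$ have a consistent sign across all $i$; without it the differences could cancel and the conclusion would fail, so it is used in an essential way.

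One small subtlety worth a sentence: the proposition's phrasing "for pairs $j_1, j_2$ such that $\tau_{ij_1} \ge \tau_{ij_2}$" implicitly refers to the common ordering, so I would state at the outset that under the hypothesis there is a well-defined total preorder on $[K]$ and $w$ respects it. If one wanted the strict version ($w_{j_1} > w_{j_2}$ whenever the inequality is strict in some row), the same computation gives it immediately, since $w_i > 0$ strictly for all $i$ and at least one summand is then strictly positive — but the stated proposition only claims the weak inequality, so the nonnegativity of $w$ suffices and no further work is needed.
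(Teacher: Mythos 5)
Your proof is correct and takes essentially the same route as the paper: the paper's argument phrases the identity $w_{j} = \sum_{i} w_i \tau_{ij}$ probabilistically, as $P(Z_1 = j) = \sum_i P(Z_0 = i)\,P(Z_1 = j \mid Z_0 = i)$ for a chain started in the stationary distribution, and then compares the two sums termwise using the hypothesis, exactly as in your subtraction step. Your added remarks on nonnegativity of $w$ and on where the common-ordering hypothesis is used are correct but not substantively different from what the paper does implicitly.
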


\begin{proposition}[Bounded by min and max trust]\label{prop:trust-bound}
The final weight assigned to agent $i$ is between the minimal and maximal trust assigned to it by the set of agents: 
\begin{equation*}
    \max_{j} \tau_{ji} \ge w_i \ge \min_{j} \tau_{ji}.
\end{equation*}
\end{proposition}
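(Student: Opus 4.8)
The plan is to exploit the characterization recalled in Section~2.2: the consensus weights $w$ form the (unique) stationary distribution of the stochastic matrix $T=\{\tau_{ij}\}_{i,j=1}^K$. First I would write the stationarity condition $w = wT$ componentwise, obtaining $w_i = \sum_{j=1}^K w_j\,\tau_{ji}$ for every $i \in [K]$. Next I would invoke the discussion following equation~\eqref{eq:consensus}: since all entries of $T$ are strictly positive, the associated Markov chain is irreducible and aperiodic, so a unique stationary distribution exists, and its entries are nonnegative and sum to one. Hence, for each fixed $i$, the number $w_i = \sum_{j} w_j \tau_{ji}$ is a convex combination of the $K$ quantities $\{\tau_{ji} : j\in[K]\}$ — the entries of the $i$-th column of $T$ — with mixture coefficients $w_j$.

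The proposition then follows from the elementary fact that a convex combination of real numbers lies between their minimum and maximum: $\min_j \tau_{ji} \le \sum_j w_j \tau_{ji} \le \max_j \tau_{ji}$, which is precisely $\min_j \tau_{ji} \le w_i \le \max_j \tau_{ji}$.

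There is essentially no substantive obstacle here; the only points requiring a little care are (i) averaging against the correct slice of $T$ — the weight $w_i$ is a combination of column~$i$ of $T$, i.e. the trust that others place in agent $i$, not row~$i$ — and (ii) ensuring the self-trust term $j=i$ is included in the min and max, consistent with the statement. As an alternative route that does not cite the stationary-distribution fact directly, one could argue from $T^\ast = \lim_{m\to\infty} T^m$: each row of $T^m$ is a probability vector, so the $(k,i)$ entry of $T^{m+1}=T^m T$ equals $\sum_j (T^m)_{kj}\,\tau_{ji}$, which lies in $[\min_j \tau_{ji},\,\max_j \tau_{ji}]$ for all $m$; passing to the limit and using that every row of $T^\ast$ equals $w$ yields the same conclusion.
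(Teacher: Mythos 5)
Your proof is correct and is essentially the same argument as the paper's: the paper phrases it probabilistically via the stationary Markov chain, writing $w_i = P(Z_1 = i) = \sum_j P(Z_0=j)\,P(Z_1=i \mid Z_0=j) = \sum_j w_j \tau_{ji}$ and bounding this convex combination of column $i$ by its min and max, which is exactly your stationarity identity $w_i = \sum_j w_j \tau_{ji}$ in different notation. (Incidentally, the paper's displayed inequality writes $\max_j$ on both sides --- an evident typo for $\min_j$ on the right --- and your version states the bound correctly.)
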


Finally, Algorithm~\ref{alg:DG} recovers equally-weighted averaging whenever each agent receives a constant amount of combined total trust from the other agents. Thus, if all agents perform equally well on the different validation sets, averaging is naturally recovered. However, the trust not necessarily needs to be allocated uniformly, there may be multiple $T$ that result in a uniform stationary distribution. 

\begin{proposition}[Averaging as a special case]\label{prop:recover-avg} 
If the columns of $T$ sum to 1, then Algorithm~\ref{alg:DG} returns an equal weighting of the agents: $w_i = 1/K$ for $i=1,\dots,K$.
\end{proposition}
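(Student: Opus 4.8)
The plan is to leverage the characterization of the consensus weights $w$ recalled earlier, namely that $w$ is the unique stationary distribution of the Markov chain whose transition matrix is the trust matrix $T=\{\tau_{ij}\}_{i,j=1}^K$, i.e.\ the unique probability vector with $wT=w$, and that the pooling iteration~\eqref{eq:consensus} converges to $p^\ast(x')=\sum_{j=1}^K w_j f_j(x')$ from any initialization, in particular from $p_j^{(0)}=f_j(x')$. The strict positivity of every entry $\tau_{ij}\in(0,1]$ is what makes the chain irreducible and aperiodic, hence guarantees this uniqueness. So it suffices to exhibit $w$ explicitly under the stated hypothesis.

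First I would record that $T$ is row-stochastic by construction: the normalization~\eqref{eq:taunormalize} forces $\sum_{j\in[K]}\tau_{ij}=1$ for every $i$. The hypothesis of the proposition adds that the columns of $T$ also sum to one, $\sum_{i\in[K]}\tau_{ij}=1$ for every $j$; together these say $T$ is doubly stochastic.

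Next I would verify directly that the uniform vector $w=\tfrac{1}{K}\mathbf{1}$ is stationary: its $j$-th coordinate after applying $T$ is $\sum_{i\in[K]}\tfrac{1}{K}\tau_{ij}=\tfrac{1}{K}\sum_{i\in[K]}\tau_{ij}=\tfrac{1}{K}$ by the column-sum condition, and $\tfrac{1}{K}\mathbf{1}$ is evidently a valid probability vector. By uniqueness of the stationary distribution, $w=\tfrac{1}{K}\mathbf{1}$ is \emph{the} consensus-weight vector, so $p^\ast(x')=\sum_{j=1}^K w_j f_j(x')=\tfrac{1}{K}\sum_{j=1}^K f_j(x')$, i.e.\ $w_i=1/K$ for all $i$.

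I do not expect a real obstacle here: the entire content is the observation that a row-stochastic matrix with unit column sums is doubly stochastic, and that the uniform distribution is always stationary for such a matrix — and, under the positivity already used in the paper, the \emph{only} stationary distribution. The single point needing a touch of care is to make sure the appeal to uniqueness is licensed, which it is precisely because all trust scores are strictly positive.
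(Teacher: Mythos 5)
Your proof is correct and follows essentially the same route as the paper's: both verify that the uniform vector is stationary under the column-sum (doubly stochastic) hypothesis and invoke the uniqueness of the stationary distribution guaranteed by the strict positivity of the trust scores. Your write-up is simply a more explicit version of the paper's one-line argument.
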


Together these properties serve as a basic sanity check that Algorithm~\ref{alg:DG} implements a reasonable consensus-finding procedure, from a decision-theoretic as well as algorithmic perspective.

\subsection{Optimality in a large-sample limit}
In this section we analyze the large-sample behavior of the DeGroot consensus mechanism, showing that it is optimal under the assumption that agents are independent. 

For our large-sample analysis, we suppose that the agents' local datasets $\mathcal{D}_k$ are drawn independently from (different) distributions $\mathcal P_k$ over $\mathcal{X} \times \mathcal{Y}$. In order for our prediction task to be well-defined, we assume that the conditional distribution of $Y$ given $X$ is the same for all $\mathcal P_k$. In other words, the distributions $\mathcal P_k$ are all covariate shifts of each other. For simplicity, we assume the distributions $\mathcal P_k$ are all continuous and have the same support. We will consider the large-sample regime in which each agent has a growing amount of data, growing at the same rate. That is, if $n = |\mathcal{D}_1| + \dots + |\mathcal{D}_k|$ is the number of total training points, then 
$ |\mathcal{D}_k| / n \to c_k > 0,$ as $n\to\infty$ for each agent $k=1,\dots,K$. Lastly, we require a basic consistency condition: 
for any compact set $\mathcal{A} \subset \mathcal{X}$, ${f}_k \to f^*_k$ uniformly over $x \in \mathcal{A}$ as $n \to \infty$, for some continuous function $f^*_k$.

\begin{theorem}[DeGroot converges to inverse-MSE weighting]\label{thm:conv-inverse-mse}
Let $x' \in \mathcal{X}$ be some test point. Assume that $\mathcal P_k$ is supported in some ball of radius $\delta_0$ centered at $x'$, and that the first four conditional moments of $Y$ given $X = x$ are continuous functions of $x$ in this neighborhood.
Next, suppose we run Algorithm~\ref{alg:DG} choosing $N=\lceil n^c \rceil$ nearest neighbors for some $c \in (0,1)$.  Let
\begin{equation*}
    \mathrm{MSE}^*_k = \mathbb{E}\left[(Y - f^*_k(x'))^2\right]
\end{equation*}
denote the asymptotic MSE of model $k$ at the test point $x'$, where the expectation is over a draw of $Y$ from the distribution of $Y \mid X = x'$. Then, the DeGroot consensus mechanism yields weights
\begin{equation*}
    w_k \to w_k^* = \frac{1/\mathrm{MSE}^*_k}{1/\mathrm{MSE}^*_1 + \dots + 1/\mathrm{MSE}^*_K}.
\end{equation*}
\end{theorem}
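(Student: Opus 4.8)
The plan is to establish the theorem in two stages: first show that the empirical local MSE estimates $\mathrm{MSE}_{ij}(x')$ converge (in probability) to the common limit $\mathrm{MSE}^*_j$ for every pair $i,j$, and then push this convergence through the normalization \eqref{eq:taunormalize} and the stationary-distribution map to obtain the claimed limit for the consensus weights $w_k$. The key structural observation is that the limit of $\mathrm{MSE}_{ij}(x')$ does not depend on $i$: since all $\mathcal P_k$ share the same conditional law of $Y \mid X$, the $N$-nearest-neighbor set $\DCal_i(x')$ concentrates near $x'$ and the average of $(f_j(x)-y)^2$ over that set converges to $\mathbb{E}[(Y - f^*_j(x'))^2] = \mathrm{MSE}^*_j$, regardless of which agent $i$'s covariate distribution generated the neighbors.

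First I would control the radius of the neighborhood. With $N = \lceil n^c \rceil$ and $|\DCal_i| \asymp c_i n$, standard nearest-neighbor arguments (using that $\mathcal P_i$ is continuous with support containing $x'$) give that the $N$-th nearest neighbor distance $r_i(x') \to 0$ in probability, at rate roughly $N/n)^{1/d}$. Second, on the event that all neighbors lie within radius $r$ of $x'$, I would decompose the error of the empirical average: write $(f_j(x)-y)^2 = (f_j^*(x')-y)^2 + [\text{terms involving } f_j(x)-f_j^*(x') \text{ and } f_j^*(x)-f_j^*(x')]$. The first term averages to something close to $\mathbb{E}[(Y-f_j^*(x'))^2 \mid X = x]$ for $x$ near $x'$, which by continuity of the first two conditional moments of $Y$ given $X$ (that is where the four-moments hypothesis enters, via a variance bound for the Monte Carlo error) is close to $\mathrm{MSE}^*_j$; the cross terms vanish by uniform consistency $f_j \to f_j^*$ on the compact ball and continuity of $f_j^*$. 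A Chebyshev/variance argument — here the third and fourth conditional moments guarantee the per-sample variance of $(f_j(x)-y)^2$ is uniformly bounded near $x'$, so averaging over $N \to \infty$ points kills the fluctuation — yields $\mathrm{MSE}_{ij}(x') \xrightarrow{p} \mathrm{MSE}^*_j$ for each $i,j$.

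Third, assuming $\mathrm{MSE}^*_j > 0$ for all $j$ (the generic case; if some $\mathrm{MSE}^*_j = 0$ the statement degenerates and the argument must be adjusted, or one rules it out by assumption since $f_j^*$ exactly matching the conditional mean with zero conditional variance is non-generic), the normalization map $(\mathrm{MSE}_{ij})_j \mapsto (\tau_{ij})_j$ in \eqref{eq:taunormalize} is continuous, so $\tau_{ij} \xrightarrow{p} \tau_{ij}^* := (1/\mathrm{MSE}^*_j) / \sum_{\ell}(1/\mathrm{MSE}^*_\ell)$, which is again independent of $i$. Hence the trust matrix $T$ converges entrywise in probability to the rank-one stochastic matrix $T^*$ whose every row equals the vector $w^*$ defined in the theorem. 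Finally, the map sending a stochastic matrix to its stationary distribution is continuous at $T^*$ — $T^*$ is a strictly positive (hence irreducible, aperiodic) matrix, so the stationary distribution is a simple eigenvalue and depends continuously on the matrix entries near $T^*$ — and the stationary distribution of a rank-one stochastic matrix with identical rows $w^*$ is exactly $w^*$. Composing, $w_k \xrightarrow{p} w_k^*$, as claimed. (Since all agents compute $p^\ast$ by the same power-iteration, this is the convergence of the consensus weights, and by the continuous mapping theorem also of the consensus prediction.)

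The main obstacle is the second step: making the local Monte Carlo estimate rigorous uniformly over the shrinking random neighborhood. One must handle the interplay between the randomness of the neighborhood $\DCal_i(x')$ (which depends on the $X$-values) and the randomness of the $Y$-values given those $X$-values, ideally by conditioning on the covariates, bounding the conditional bias via continuity of the conditional moments, and bounding the conditional variance via the fourth-moment hypothesis — all while $N \to \infty$ but $N/n \to 0$ so that the neighborhood genuinely localizes. The uniform-consistency assumption $f_j \to f_j^*$ must also be invoked carefully: it is a statement about the randomness in the training of model $f_j$, which is independent of agent $i$'s validation data, so a union bound / conditioning argument cleanly separates the two sources of error. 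Everything downstream (continuity of normalization and of the stationary-distribution map) is routine.
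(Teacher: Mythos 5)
Your proposal is correct and follows essentially the same route as the paper's proof: localize the $N$ nearest neighbors via a binomial/CLT argument, sandwich the conditional expectation of the local MSE between the infimum and supremum of $(\mu(x)-f_j(x))^2+\sigma(x)^2$ over the shrinking ball, control the fluctuation by a Chebyshev bound using the higher conditional moments, and note that the limiting trust scores are independent of $i$. In fact you are slightly more complete than the paper on the final step (continuity of the stationary-distribution map at the rank-one limit matrix $T^*$, and the implicit assumption $\mathrm{MSE}^*_j>0$), which the paper's proof leaves tacit.
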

In other words, in the large-sample limit the DeGroot aggregation yields inverse-MSE weighting and thus recovers inverse-variance weighting for unbiased estimators---a canonical weighting scheme from classical statistics~\citep{hartung2008statistical} that is known to be the optimal way to linearly combine independent, unbiased estimates. This can be  extended to our setting to show that DeGroot averaging leads to the optimal aggregation when we have independent\footnote{While independence is a condition that we can't readily check in practice, it may hold approximately if the agents use different models constructed from independent data. In any case, the primary takeaway is that DeGroot is acting reasonably in this tractable case.} 
agents, as stated next. 

\begin{theorem}[DeGroot is optimal for independent, unbiased agents]\label{thm:degroot_indep_opt}
Assume that the conditional mean and variance of $Y$ given $X = x$ are continuous functions of $x$. For some $\delta > 0$, consider drawing $\tilde{X}$ uniformly from a ball of radius $\delta$ centered at $x'$ and $\tilde{Y}$ from the distribution of $Y \mid X = \tilde{X}$. Under this sampling distribution, suppose the residuals $\tilde{Y} - {f}_k^*(\tilde{X})$ from the agents' predictions have mean zero and each pair has correlation zero. Then the optimal weights,
\begin{equation*}
    \tilde{w} := \argmin_{w \in \mathbb{R}^k: \|w\|_1 = 1} \mathbb{E}_{(\tilde{X}, \tilde{Y})}\bigg[{\Big(\tilde{Y} - \sum_{k\in[K]} w_k {f}_k^*(\tilde{X})\Big)^2}\bigg],
\end{equation*}
approach the DeGroot weights:
$\tilde{w} \to w^*$ as $\delta \to 0$.
\end{theorem}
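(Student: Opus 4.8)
The plan is to reduce the constrained least-squares problem to a weighted sum of squares over the probability simplex, solve that in closed form to obtain inverse-MSE weighting at every scale $\delta$, and then pass to the limit $\delta \to 0$ by continuity of the conditional moments of $Y$ near $x'$.

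First I would fix $\delta > 0$, write $r_k := \tilde Y - f_k^*(\tilde X)$ for the residual of model $k$ under the $\delta$-ball sampling distribution, and set $M_k(\delta) := \E[r_k^2]$, its local mean-squared error. On the probability simplex $\Delta = \{w : w \ge 0,\ \sum_{k} w_k = 1\}$, which is precisely the part of the feasible set $\{\|w\|_1 = 1\}$ lying in the nonnegative orthant, the constraint $\sum_k w_k = 1$ yields $\tilde Y - \sum_k w_k f_k^*(\tilde X) = \sum_k w_k r_k$. Squaring and taking expectations, the mean-zero hypothesis annihilates every term $\E[r_k]$ and the pairwise zero-correlation hypothesis annihilates every cross term $\E[r_j r_k]$ with $j\ne k$, so the objective collapses to
\[
    \E\!\left[\Big(\tilde Y - \sum_{k\in[K]} w_k f_k^*(\tilde X)\Big)^{2}\right] = \sum_{k\in[K]} w_k^{2}\, M_k(\delta).
\]
This is a strictly convex quadratic on the hyperplane $\sum_k w_k = 1$; a Lagrange-multiplier computation gives the unique minimizer $w_k^{(\delta)} = \big(1/M_k(\delta)\big)\big/\sum_j \big(1/M_j(\delta)\big)$, i.e.\ inverse-MSE weighting at scale $\delta$, and since $w^{(\delta)}$ is strictly positive it lies in the relative interior of $\Delta$ and hence also minimizes the objective over $\Delta$ itself.

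Next I would check that enlarging the feasible set from $\Delta$ to all of $\{\|w\|_1 = 1\}$, which permits negative entries, does not help, so that $\tilde w(\delta) = w^{(\delta)}$. Setting $s := \sum_k w_k$ (so $|s| \le \|w\|_1 = 1$, with $s = 1$ exactly on $\Delta$) and redoing the expansion without the constraint gives $(1-s)^2\E[\tilde Y^2] + 2(1-s)\sum_k w_k \E[\tilde Y r_k] + \sum_k w_k^2 M_k(\delta)$; combined with $\sum_k w_k^2 \ge s^2/K$, the crude bound $\big|\sum_k w_k f_k^*(\tilde X)\big| \le \max_k |f_k^*(\tilde X)|$, and the fact that the $M_k(\delta)$ stay bounded and bounded away from $0$ for small $\delta$, one verifies that any feasible $w$ with $s<1$ is strictly suboptimal once $\delta$ is small enough. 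Then I would let $\delta \to 0$: writing $M_k(\delta) = \E_{\tilde X}\!\big[\Var(Y\mid \tilde X) + (\E[Y\mid\tilde X]-f_k^*(\tilde X))^2\big]$ and using continuity of the first two conditional moments near $x'$, continuity of $f_k^*$, and concentration of $\tilde X$ at $x'$, dominated convergence over the shrinking ball yields $M_k(\delta) \to \Var(Y\mid X=x') + (\E[Y\mid X=x'] - f_k^*(x'))^2 = \mathrm{MSE}^*_k$. Since each $\mathrm{MSE}^*_k > 0$, the normalization is continuous at the limit, so $\tilde w(\delta) = w^{(\delta)} \to w^*$; by Theorem~\ref{thm:conv-inverse-mse} these are exactly the DeGroot weights, which is the claim.

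The hard part will be the middle step --- verifying that the $\ell_1$-sphere constraint (rather than the simplex) still singles out inverse-MSE weighting. The closed-form simplex computation and the $\delta\to 0$ limit are routine, but excluding a sign-flipped combination that beats inverse-MSE requires the bookkeeping above, and in the degenerate case $\E[Y\mid X=x']=0$ a separate short argument (or one simply takes the feasible set to be $\Delta$, which is the natural domain of a convex combination and matches the nonnegative weights returned by Algorithm~\ref{alg:DG}).
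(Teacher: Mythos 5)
Your proof follows essentially the same route as the paper's: use the sum-to-one constraint to rewrite the prediction error as $\sum_k w_k(\tilde Y - f_k^*(\tilde X))$, kill the cross terms via the mean-zero and zero-correlation hypotheses so the objective becomes the diagonal quadratic $\sum_k w_k^2 \Var(\tilde Y - f_k^*(\tilde X))$, solve for the inverse-variance minimizer, and pass to the limit $\delta\to 0$ by continuity of the conditional moments. The only divergence is your (admittedly unfinished) attempt to rule out weight vectors with negative entries on the $\ell_1$ sphere; the paper silently reads the constraint as $1^\top w = 1$ and never confronts that issue, so on this point you are being more careful than the source, though neither argument fully resolves it.
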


In summary, Theorem~\ref{thm:conv-inverse-mse} shows that the DeGroot weights are asymptotically converging to a meaningful (sometimes provably optimal) aggregation of the models -- \emph{locally for each test point $x'$}. This is a stronger notion of adaptivity than that usually considered in the model-averaging literature. 

\subsection{Error bars for predictions}

Finally, we develop a decentralized jackknife algorithm~\citep{quenouille1949, Efron1993} 
to estimate the standard error of the consensus prediction $p^*(x')$. 
Our proposed procedure measures the impact of excluding a random agent from the ensemble and returns error bars that measure how stable the consensus prediction is to the observed collection of agents. Formally, let $p^*_{-i}(x')$ denote the consensus reached by the ensemble after removing agent $i$. Then, the jackknife estimate of standard error at $x'$ corresponds to 
\begin{equation}
\widehat{\mathrm{SE}}(x') = \sqrt{\frac{K-1}{K} \sum_{i\in[K]} \left(p^{*}_{-i}(x') - \bar{p}^*(x')\right)^2},
\label{eq:SE}
\end{equation}
where $\bar{p}^*(x') = \frac{1}{K} \sum_{i=1}^K p^{*}_{-i}(x')$ is the average delete-one prediction.

In the collective prediction setting, this is an attractive target of inference because it can be computed in a decentralized manner, is entirely agnostic to the data collection or training mechanism employed by the agents, and requires no additional model evaluations above those already carried out in Algorithm~\ref{alg:DG}. Furthermore, it allows one to diagnose the impact of a single agent on the collective prediction and thus assess the vulnerability of the algorithm to malicious agents.
A detailed description of the DeGroot jackknife procedure can be found in Algorithm~\ref{alg:DG_jackknife} in Appendix~\ref{app:standard_errors}.

\section{Experiments}
We investigate the efficacy of the DeGroot aggregation mechanism empirically for various datasets, partitioning schemes and model configurations. We start with a synthetic setup to illustrate the strengths and limitations of our method. Then, we focus on real datasets to see how these gains surface in more practical applications with natural challenges, including data scarcity, heterogeneity of different kinds, and scalability. We compare our Algorithm~\ref{alg:DG} (DeGroot) to the natural baseline of equally-weighted model averaging (M-avg) and also compare to the performance of individual models composing the ensemble. In Section~\ref{sec:realdata} we include an additional comparison with two reference schemes that have access to shared validation data and a central agent that determines an (adaptive) weighting scheme---information DeGroot does not have.

\subsection{Synthetic data}
\label{sec:synthetic}

First, we investigate a synthetic two-dimensional setting where each agent's features are drawn from a multivariate Gaussian distribution, $x\sim\mathcal N(\mu_k,\Sigma_k)$.
The true labeling function is given as $y=[1+\text{exp}({\alpha^\top x})]^{-1}$ and we add white noise of variance $\sigma_Y^2$ to the labels in the training data. 
Unless stated otherwise, we use $K=5$ agents and let each agent fit a linear model to her local data. See Figure~\ref{fig:linearmodels} in the Appendix for a visualization of the models learned by the individual agents, and how they accurately approximate the true labeling function in different regions of the input space.\footnote{If not stated otherwise we spread the means as $\mu=\{[\text{-}3,\text{-}4],[\text{-}2,\text{-}2],[\text{-}1,\text{-}1],[0,0],[3,2]\}$ and let $\Sigma_k =\text{I}$. We use 200 training samples on each agent, for the labeling function we use $\alpha=[1,1]$, and for the label noise in the training data we set $\sigma_Y =0.1$. We use $N=5$ for local cross-validation in DeGroot.}

\begin{figure}[t!]
    \centering
    \subfloat[{\centering individual test points}]{\includegraphics[height=0.24\textwidth]{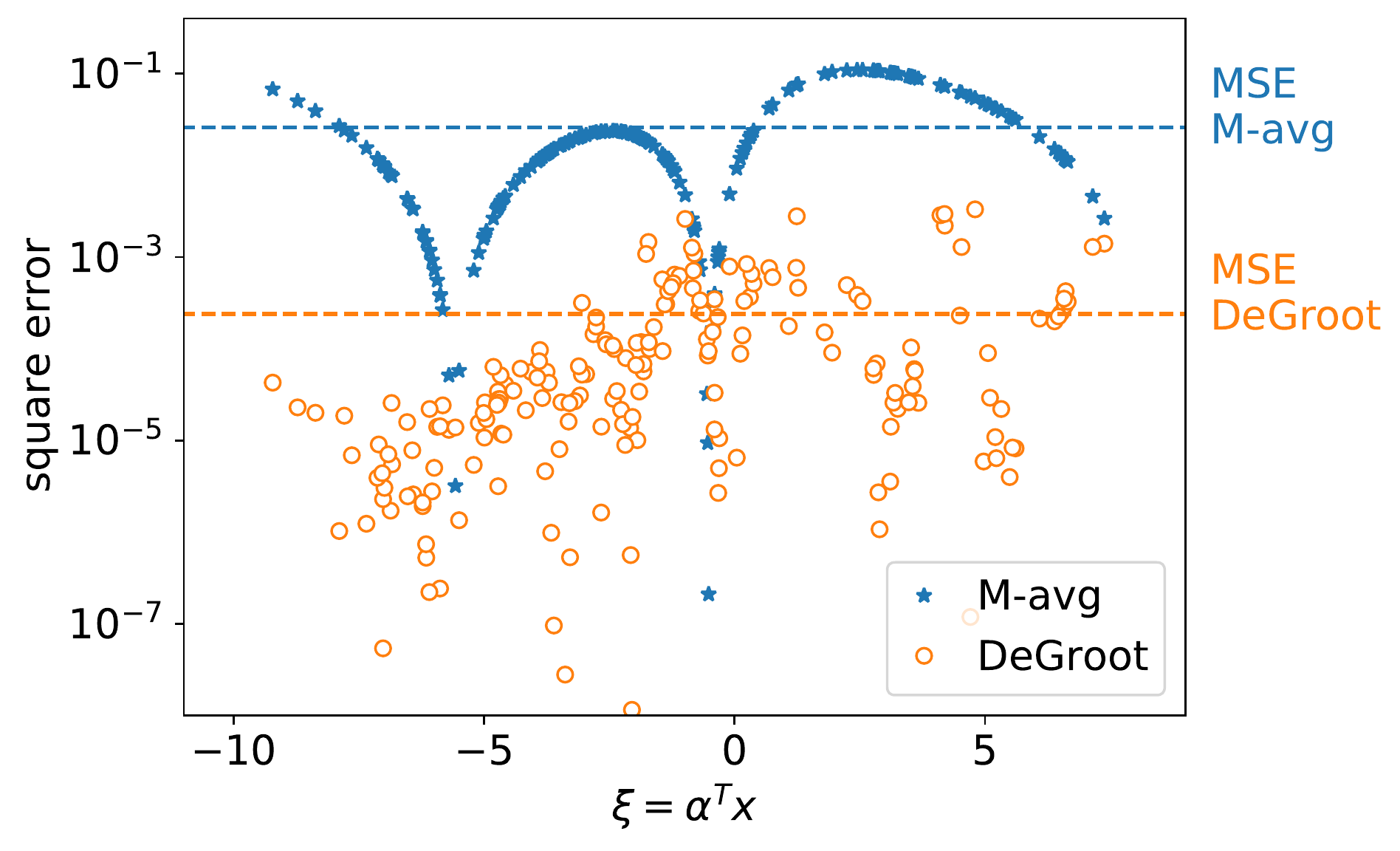}\label{fig:testerror}}\hfill
    \subfloat[consensus finding]{\includegraphics[height=0.24\textwidth]{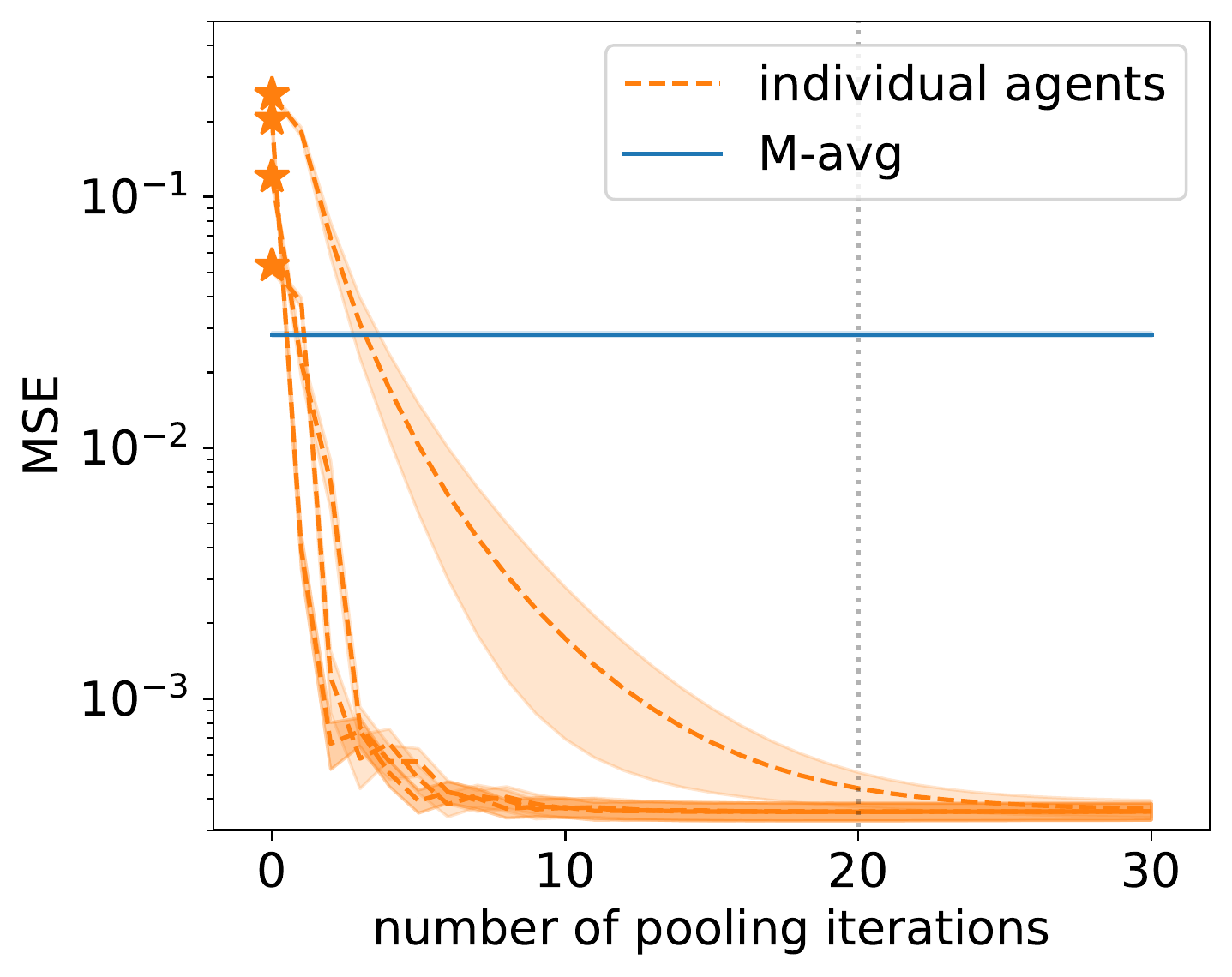}\label{fig:consensus}}
    \subfloat[DeGroot jackknife]{\includegraphics[height=0.24\textwidth]{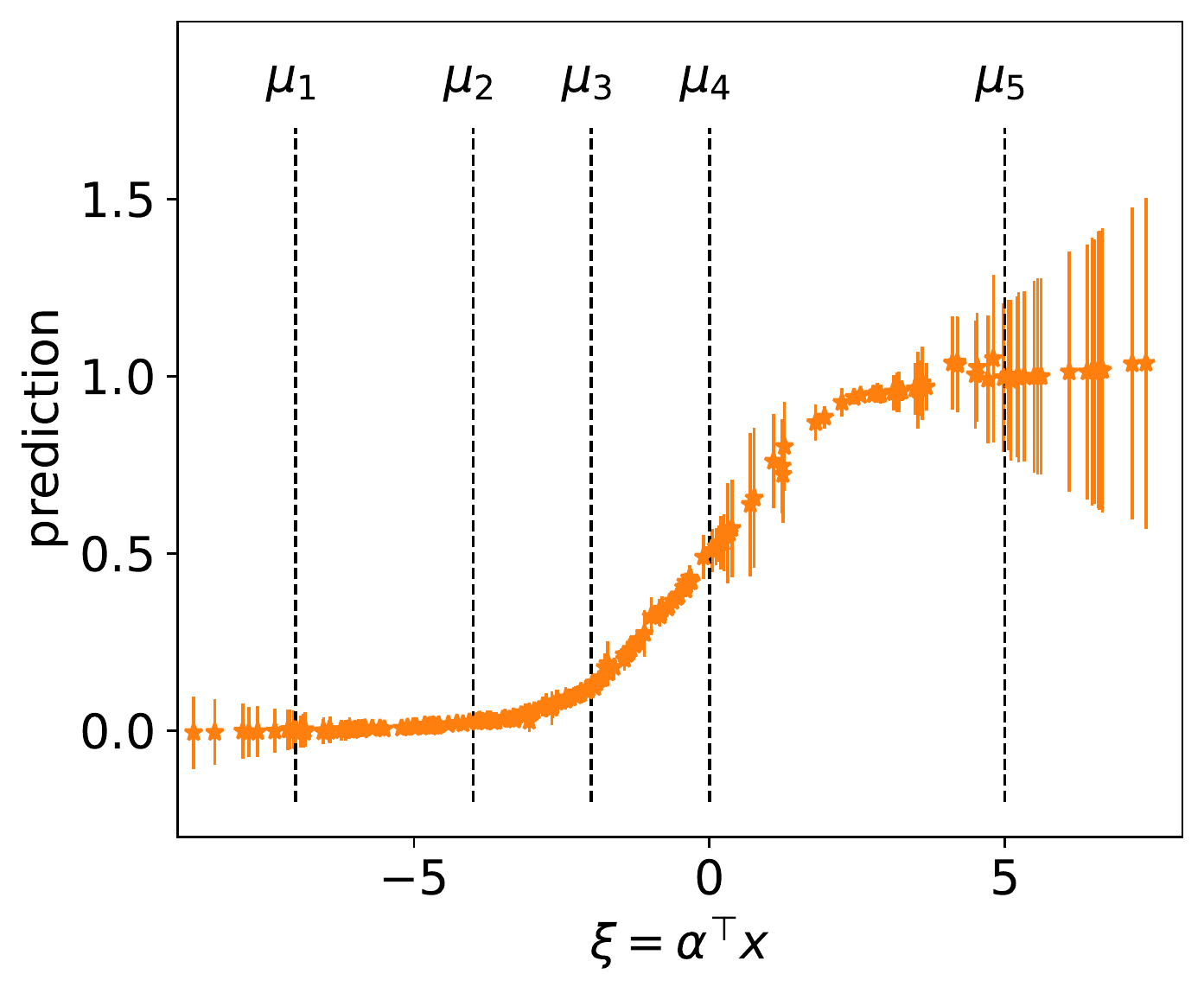}\label{fig:errorbars}}
     \caption{\emph{(Synthetic data experiment)}.  Comparison of DeGroot to M-avg. (a) Predictive accuracy on individual test points. (b) Performance for varying number of pooling iterations during DeGroot consensus finding. (c) Confidence intervals for individual test points returned by DeGroot jackknife. 
    }
    \label{fig:local-fit}
\end{figure}

To evaluate the DeGroot aggregation procedure we randomly sample 200 test points from the uniform mixture distribution spanned by the $K$ training data distributions, and compare the prediction of DeGroot to M-avg. We visualize the squared error for the collective prediction on the individual test points in Figure~\ref{fig:testerror} where we use $\xi:=\alpha^\top x$ as a summary for the location of the test point on the logistic curve. We observe that DeGroot consistently outperforms M-avg across the entire range of the feature space. Exceptions are three singularities, where the errors of the individual agents happen to cancel out by averaging. 
Overall DeGroot achieves an MSE of $4.6e^{-4}$ which is an impressive reduction of over $50\times$ compared to M-avg. Not shown in the figure is the performance of the best individual model which achieves an MSE of $5e^{-2}$ and performs worse than the M-avg baseline. Thus, DeGroot outperforms an oracle model selection algorithm and every agent strictly benefits from participating in the ensemble.
The power of adaptive weighting such as used in DeGroot is that it can trace out a nonlinear function, given only linear models, whereas any static weighting (or model selection) scheme will always be bound to the best linear fit.

In Figure~\ref{fig:consensus} we show how individual agent's predictions improve with the number of pooling iterations performed in Step 9 of Algorithm~\ref{alg:DG}. The influence of each local model on the consensus prediction of $x'$ and how it changes depending on the location of the test point is illustrated in Figure~\ref{fig:teaser} in the introduction. 

An interesting finding that we expand on in Appendix~\ref{app:synthetic} 
is that for the given setup the iterative approach of DeGroot is superior to a more naive approach of combining the individual trust scores or local MSE values into a single weight vector. In particular, DeGroot reduces the MSE by $20\times$ compared to using the average trust scores as weights.

In Figure~\ref{fig:errorbars} we visualize the error bars returned by our DeGroot jackknife procedure. As expected, the intervals are large in regions where only one a small number of agents possess representative training data, and small in regions where there is more redundancy across agents.

Finally, in Appendix~\ref{app:synthetic} 
we conduct further qualitative investigations of the DeGroot algorithm. First, we vary the covariance in the local data feature distribution. We find that the gain of DeGroot over M-avg becomes smaller as the covariance is increased; models become less specialized and there is little for DeGroot to exploit. On the other extreme, if the variance in the data is extremely small, the adaptivity mechanism of DeGroot becomes less effective, as the quality of individual models can no longer be reliably assessed. However, we need to go to extreme values in order to observe these phenomena.
In a second experiment we explore DeGroot's sensitivity to choices of $N$. We find that when $N$ is chosen too large adaptivity can suffer, and for very small values of $N$ performance can degrade if the labels are highly noisy. However, there is a broad spectrum of values for which the algorithm performs well, between $1\%$ and $10\%$ of the local data usually a good choice.

\subsection{Real datasets}
\label{sec:realdata}

\begin{figure}[t!]
    \centering
    \subfloat[label heterogeneity]{\includegraphics[width=0.33\textwidth]{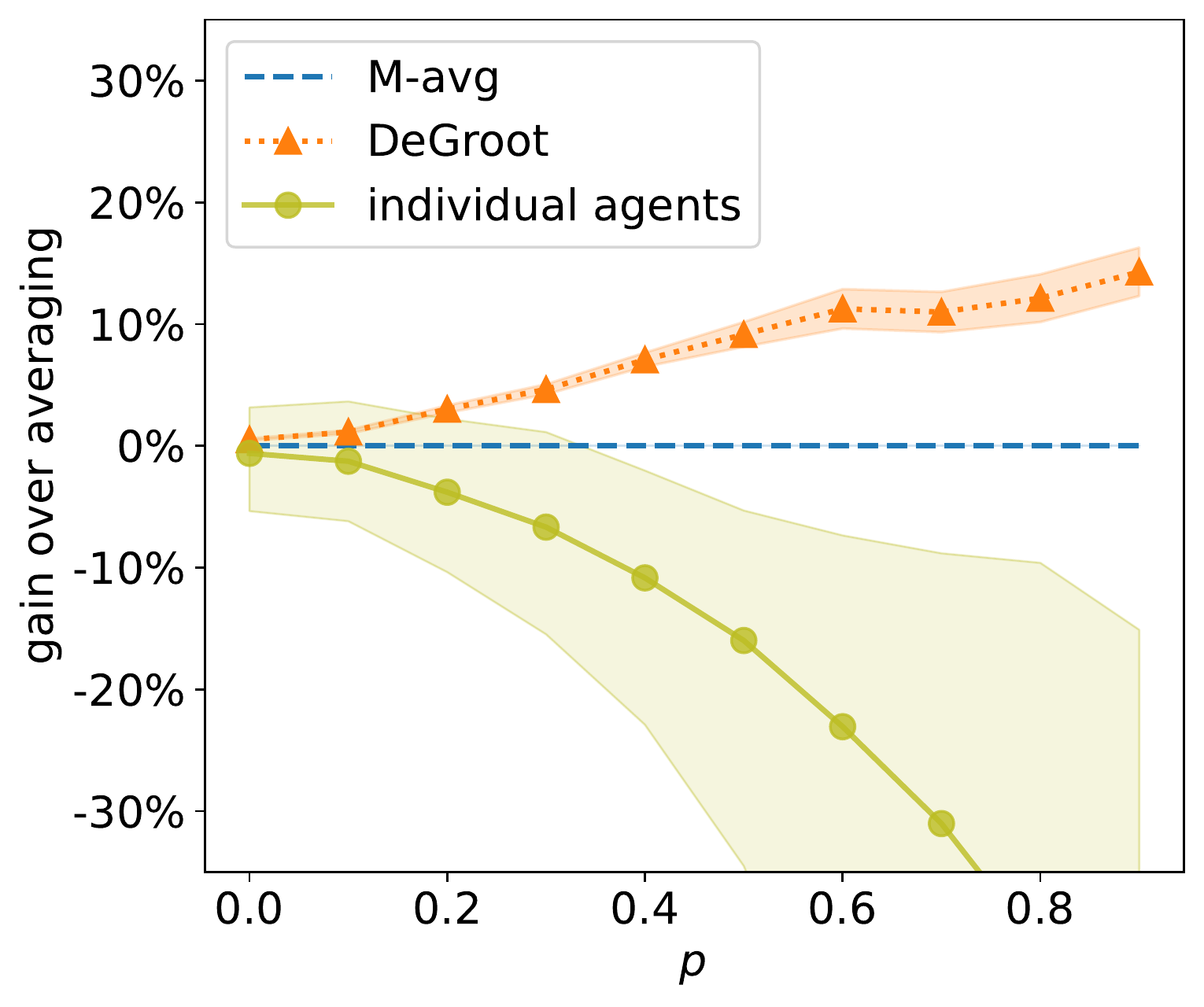}\label{fig:heter1}
    }
    \subfloat[feature heterogeneity]{\includegraphics[width=0.33\textwidth]{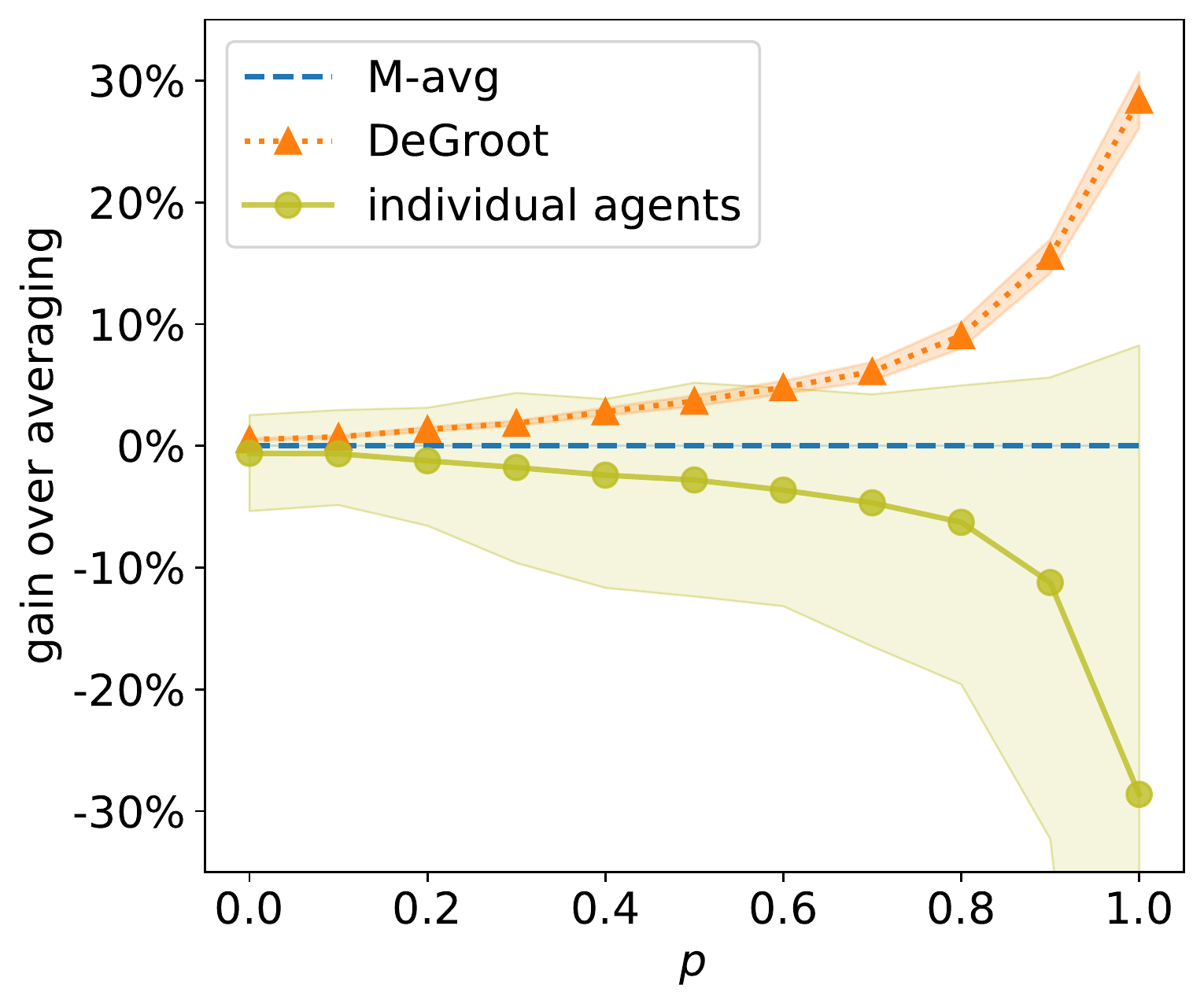}\label{fig:heter3}
    }
    \subfloat[regularizer heterogeneity]{\includegraphics[width=0.33\textwidth]{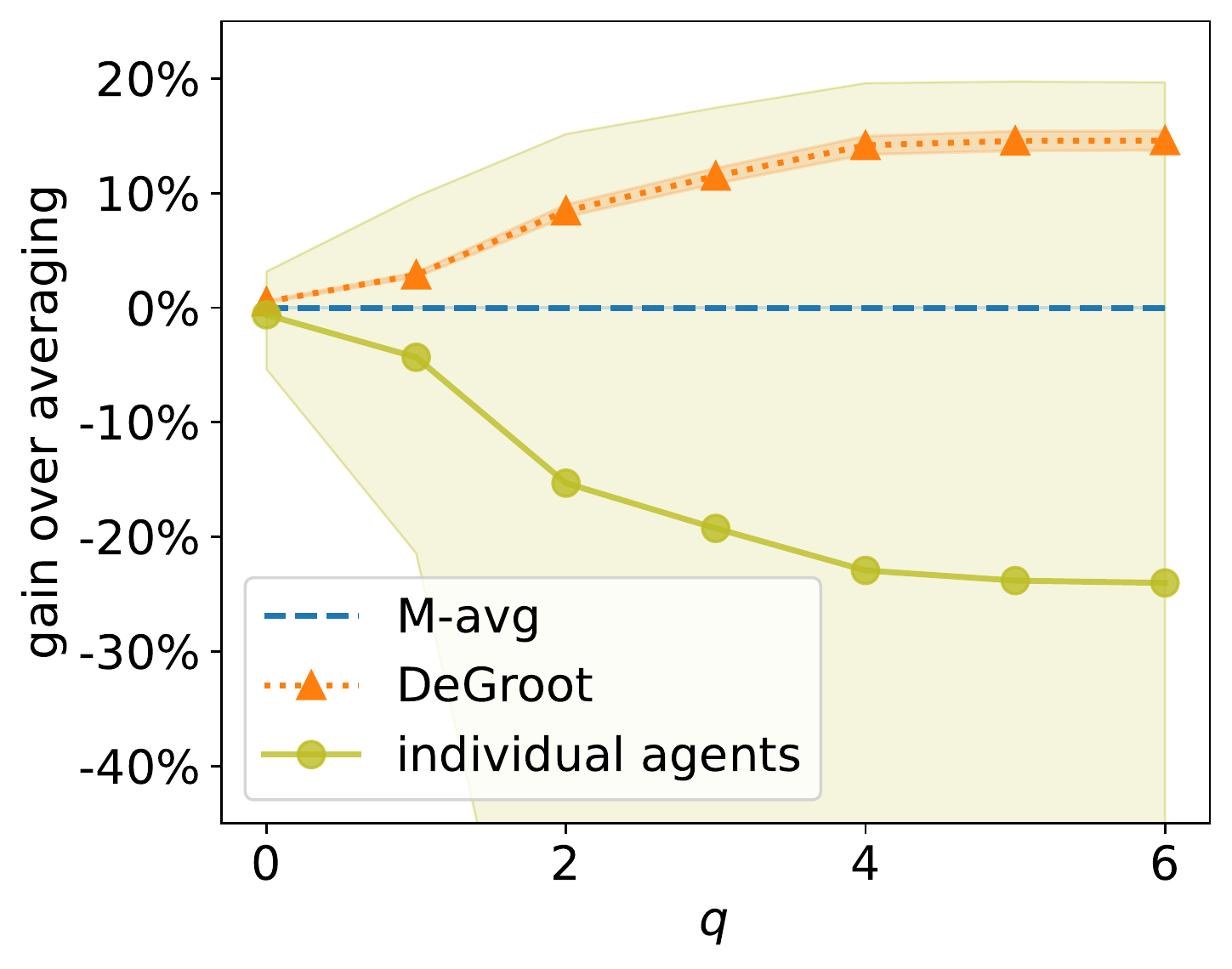}\label{fig:heter2}}
     \caption{\emph{(Data and model heterogeneity for the abalone data).} Relative gain over M-avg. Confidence intervals of DeGroot are over randomness in data partitioning and for the individual agents the shaded area spans the performance from the worst to the best model, the line indicating average performance. \vspace{-0.2cm}}
    \label{fig:heter}
\end{figure}

Turning to real data sets, we first investigate how DeGroot deals with three different sources of heterogeneity. We work with the abalone dataset~\citep{abalonedata} and train a lasso model on each agent\footnote{Similar phenomena can be observed for other datasets and models (see Appendix~\ref{app:realdata})} 
with regularization parameter $\lambda_k=\lambda'$ that achieves a sparsity of  ${\scriptsize \sim}0.8$.  
For our first experiment, shown in Figure~\ref{fig:heter1}, we follow a popular setup from federated learning to control heterogeneity in the data partitioning~\citep[see, e.g.,][]{fed21shen}. We partition the training data in the following manner: 
a fraction $1-p$ of the training data is partitioned randomly, and a fraction $p$ is first sorted by the outcome variable $y$ and then partitioned sequentially.  
For the second experiment, shown in Figure~\ref{fig:heter3}, we use a similar partitioning scheme, but sort the data along an important feature dimension, instead of $y$. This directly leads to heterogeneity in the input domain. Finally, we investigate model heterogeneity arising from different hyper-parameter choices of the regularization parameter $\lambda$, given a random partitioning. To control the level of heterogeneity, we start with $\lambda = \lambda'$ on each agent, and let the parameters diverge increasingly, such that $\lambda_k =\lambda' \left[1+\frac{k-k_0}{K}\right]^q$ with $k_0=3$. The results are depicted in Figure~\ref{fig:heter2}.
In all three experiments the gain of DeGroot over M-avg is more pronounced with increasing degree of heterogeneity. When there is heterogeneity in the partitioning, DeGroot clearly outperforms the best individual model, by taking advantage of models' local areas of expertise. For the third experiment where the difference is in model quality, DeGroot almost recovers the performance of the best single model.

\begin{table}[]
\footnotesize{
    \centering
    \caption{\emph{(Benchmark comparison).} Performance for different combinations of datasets and models. Datasets have been downloaded from~\citep{libsvm}. We use ridge and lasso as a baseline and train a decision tree regressor (DTR) for discrete outcome variables, and a neural net (NN) with two hidden layers for continuous outcomes. A positive relative gain means a reduction in MSE compared to DeGroot. }
\begin{tabular}{@{}llC{40pt}rrrC{63pt}}
\toprule
         \multicolumn{2}{c}{ } & \multirow{2}{*}{\shortstack[c]{MSE\\DeGroot}} & \multicolumn{3}{c}{gain relative to DeGroot [\%]}&  \multirow{2}{*}{\shortstack[c]{hyper-\\parameters}}\\ 
         \cmidrule{4-6}
         \multicolumn{2}{c}{ }  &   & \multicolumn{1}{c}{M-avg} & \multicolumn{1}{c}{CV-static}& \multicolumn{1}{c}{CV-adaptive}&\\
          \midrule
         Boston &Ridge&25.23&\color{negvals}{-12.45$\pm$2.29}&\color{negvals}{-10.24$\pm$1.90}&\color{negvals}{-2.80$\pm$1.72}&$\lambda = 1e^{-5}$\\
           &Lasso&24.17&\color{negvals}{-13.70$\pm$1.72}&\color{negvals}{-10.35$\pm$1.16}&\color{negvals}{-3.46$\pm$1.44}&$\lambda=5e^{-3}$\\
          &NN&14.18&\color{negvals}{-15.18$\pm$3.43}&\color{negvals}{-11.01$\pm$3.08}&\color{negvals}{-5.81$\pm$3.20}&layer$ = (7,7)$\\
          \midrule
         E2006 &Ridge&0.15&\color{negvals}{-0.12$\pm$0.06}&\color{negvals}{-0.21$\pm$0.06}&\color{posvals}{0.13$\pm0.11$}&$\lambda = 5e^{-2}$\\
         &Lasso&0.097&\color{negvals}{-0.92$\pm$0.15}&\color{negvals}{-0.90$\pm$0.13}&\color{negvals}{-0.08$\pm$ 0.07}&$\lambda=5e^{-5}$ 
         \\
         &NN& 0.11 &\color{negvals}{-14.58$\pm$3.48}&\color{negvals}{-5.69$\pm$0.97}&\color{neutralvals}{-0.16$\pm0.53$}&layer$ = (9,9)$ \\
         \midrule
        Abalone & Ridge &3.18&\color{negvals}{-3.67 $\pm$ 0.55}&\color{negvals}{-3.52$\pm$0.47}& \color{negvals}{-0.49$\pm$0.46}&$\lambda = 5e^{-2}$\\
         & Lasso &4.93& \color{negvals}{-10.09$\pm$ 0.83}&\color{negvals}{-10.05$\pm$0.80}&\color{neutralvals}{-0.39$\pm$0.75}&$\lambda=5e^{-2}$\\
         & DTR &4.85&\color{negvals}{-2.53$\pm$ 0.69} &\color{negvals}{-2.55$\pm$0.70}&\color{neutralvals}{-0.79$\pm$0.88}& max\_depth = 4\\
         \midrule
         cpusmall &Ridge&59.00&\color{negvals}{-96.23$\pm$23.46}&\color{negvals}{-89.81$\pm$15.58}&\color{negvals}{-0.78$\pm$0.63}&$\lambda=1e^{-5}$\\
          &Lasso&53.82&\color{negvals}{-76.08$\pm$14.45}&\color{negvals}{-73.81$\pm$10.51}&\color{negvals}{-2.04$\pm$1.82}&$\lambda=1e^{-3}$\\
          &DTR&11.16&\color{negvals}{-4.05$\pm $1.88}&\color{negvals}{-3.90$\pm$1.81}&\color{neutralvals}{1.65$\pm$2.20}&max\_depth=7\\
          \midrule
          YearPrediction&Ridge& 95.02&\color{negvals}{-0.63$\pm$0.24} & \color{negvals}{-0.94$\pm$0.11}& \color{posvals}{0.21$\pm$0.12}& $\lambda = 1$ \\
          &Lasso& 91.21&\color{negvals}{-1.73$\pm$0.47} & \color{negvals}{-1.98$\pm$0.35}& \color{neutralvals}{-0.22$\pm$0.29}& $\lambda = 1$ \\
          &DTR&62.11&\color{negvals}{-3.13$\pm$1.83}&\color{negvals}{-2.26$\pm$1.02}&\color{negvals}{-0.54$\pm$0.53}&max\_depth = 4\\
          \bottomrule
    \end{tabular}
    \label{tab:benchmark}}
    \vspace{-0.2cm}
\end{table}

Next, we conduct a broad comparison of DeGroot to additional reference schemes across various datasets and models, reporting the results in Table~\ref{tab:benchmark}. To the best of our knowledge there is no existing generic adaptive aggregation scheme that does not rely on external validation data to either build the aggregation function or determining the weights $w$. Thus, we decided to include a comparison with an approach that uses a validation dataset to compute the weighting, where 
the external validation data has the same size as the local partitions. Our two references are static inverse-MSE weighting (CV-static) where the weights are determined based on the model's average performance on the validation data, and adaptive inverse-MSE weighting (CV-adaptive), where the performance is evaluated only in the region around the current test point $x'$. We achieve heterogeneity across local datasets by partially sorting a fraction $p=0.5$ of the labels, and we choose $N$ to be $1\%$ of the data partition for all schemes (with a had lower bound at $2$). Each scheme is evaluated for the same set of local models and confidence intervals are over the randomness of the data splitting into partitions, test and validation data across evaluation runs. The results demonstrate that DeGroot can effectively take advantage of the proprietary data to find a good aggregate prediction, significantly outperforming M-avg and CV-static, and achieving better or comparable performance to CV-adaptive that works with additional data.

Finally, we verify that our aggregation mechanism scales robustly with the number of agents in the ensemble and the simultaneously decreasing partitioning size. Results in Appendix~\ref{app:realdata} 
show that DeGroot consistently outperforms model averaging on two different datasets on the full range from 2 up to 128 agents.

\section{Discussion}

We have shown that insights from the literature on humans consensus via discourse suggest an effective aggregation scheme for machine learning. In settings where there is no additional validation data available for assessing individual models quality, our approach offers an appealing mechanism to take advantage of individual agents' proprietary data to arrive at an accurate collective prediction.
We believe that this approach has major implications for the future of federated learning since it circumvents need to exchange raw data or model parameters between agents.  Instead, all that is required for aggregation is query access to the individual models.

\section*{Acknowledgements}
We would like to thank Jacob Steinhardt and Alex Wei for feedback on this manuscript.


{
\bibliography{ref}
\bibliographystyle{plainnat}
}
\clearpage

\newpage
\appendix

\appendix

\section{Proofs}
\label{app:proofs}
For some of our proofs we will use a Markov chain interpretation of the DeGroot mechanism, where the matrix $T$ defines the transition probabilities of the Markov chain. By definition of the trust scores, the rows of $T$ sum to one. Moreover, the entries are all positive, so this is an aperiodic, irreducible Markov chain and hence has a unique stationary distribution, given by the solution $w$ to the linear system $w = Tw$~\citep[e.g.,][]{haggstrom2002}. Furthermore, $T$ has a unique largest left-eigenvector with eigenvalue $1$. It will be convenient for us to consider a Markov chain $Z_0, Z_1,\dots$ with transition matrix $T$, and $Z_0$ drawn from the stationary distribution $w$.

\begin{proof}[Proof of Proposition~\ref{prop:unanimity}]
We prove the claim by induction on the number of consensus steps $t$. First, we have that $p_i^{(0)} = f_1(x')$ for all $i \in [K]$ by assumption.  Suppose $p_i^{(t)} = f_1(x')$ for all $i \in [K]$. Then, \[p_i^{(t+1)} = \sum_{j=1}^K \tau_{i,j}p_j^{(t)} = f_1(x')\] 
since the weights sum to 1.
\end{proof}

\begin{proof}[Proof of Proposition~\ref{prop:monotone-row}]
Using our Markov chain notation above, it suffices to show that 
\[P(Z_1 = j_1) \ge P(Z_1 = j_2).\] 
It is easy to see that this is the case since 
\[P(Z_1 = j_1 \mid Z_0 = i) \ge P(Z_1 = j_2 \mid Z_0 = i)\] 
for all $i \in [K]$.
\end{proof}

\begin{proof}[Proof of Proposition~\ref{prop:trust-bound}]
Using our Markov chain notation above, note that
\begin{equation*}
\max_j \{P(Z_1 = i \mid Z_0 = j)\} \ge P(Z_1 = i) \ge \max_j \{P(Z_1 = i \mid Z_0 = j)\}.
\end{equation*}
The result follows.
\end{proof}

\begin{proof}[Proof of Proposition~\ref{prop:recover-avg}]
Note that in this case, $1^\top T = 1$ (where $1$ is the $K$-vector of all ones), so $w = 1 / n$ is the stationary state. 
\end{proof}

\begin{proof}[Proof of Theorem~\ref{thm:conv-inverse-mse}]
Choose any $i \in [K]$. We will show that $\tau_{ij} \to 1 / \mathrm{MSE}^*_j$ in probability as $n \to \infty$. 

First, let $x_1,\dots,x_N \in \mathcal{D}_i$ be the $N$ nearest neighbors to the test point $x'$ in agent $i$'s data. We claim that
\begin{equation}\label{eq:points_get_close}
\max_{m \in \{1,\dots,N\}} \|x_m - x'\| \to 0
\end{equation}
in probability as $n \to \infty$. To see this, for $\delta_0 > \delta > 0$, note that $P(\|x_m - x'\| < \delta) = \epsilon$ for some $\epsilon > 0$, since the points in $\mathcal{D}_i$ are an i.i.d. sample from a distribution supported in this ball. But then,
\[P\left(\max_{m \in \{1,\dots,N\}} \|x_m - x'\| \ge \delta\right) = P(B < N)\]
where $B$ is distributed as a binomial with $|\mathcal{D}_i|$ draws and success probability $\epsilon$. But
\begin{align*}
    P\left(B < N\right) &= P\left(\frac{B - \epsilon |\mathcal{D}_i|}{\sqrt{|\mathcal{D}_i|} \epsilon (1 - \epsilon)} <  \frac{N - \epsilon |\mathcal{D}_i|}{\sqrt{|\mathcal{D}_i|} \epsilon (1 - \epsilon)}\right) \\
\end{align*}
By assumption the term ${N - \epsilon |D_i|} / {\sqrt{|\mathcal{D}_i|} \epsilon (1 - \epsilon)} \to -\infty$ as $n \to \infty$ since $N / |\mathcal{D}_i| \to 0$. Then, by the central limit theorem we conclude that
\[P\left(B < N\right) \to 0\] 
as $n \to \infty$, and hence~\eqref{eq:points_get_close} holds.

Next, let $\mu(x) = \E[Y \mid X = x]$ and $\sigma(x) = \textrm{Var}(Y \mid X = x)$ be the true mean and variance functions, respectively. For convenience, let $B_\delta(x')$ be the ball of radius $\delta$ centered at $x'$. We have
\begin{multline*}
      \inf_{x \in B_\delta(x')} \{\mu(x) - f_j(x))^2 + \sigma(x)^2\} \le  \E\left[1 / \tau_{i,j} \big| \max_{m \in \{1,\dots,N\}} \|x_m - x'\| < \delta\right]
      \le \sup_{x \in B_\delta(x')} \{\mu(x) - f_j(x))^2 + \sigma(x)^2\}. 
\end{multline*}
Note that the lower and upper bound both converge to $\textrm{MSE}^*_j$ as $\delta \to 0$, by the continuity of $\mu(x), f_j(x)$ and $\sigma(x)$. Similarly, the conditional variance of $1 / \tau_{i,j}$ is bounded by $C / N$ for some constant $C$. Thus, by Chebyshev's inequality, there exist sequences $c_1(n) \to 0$ and $c_2(n) \to 0$ such that 
\begin{equation*}
    P\left(|\tau_{ij} - \textrm{MSE}^*_j| \ge c_1(n) \mid \max_{m \in \{1,\dots,N\}} \|x_m - x'\| < \delta \right) \le c_2(n).
\end{equation*}
Since the event $\{\max_{m \in \{1,\dots,N\}} \|x_m - x'\| < \delta\}$ has probability tending to $1$ as $n$ grows, this implies that $1 / \tau_{i,j} \to \textrm{MSE}^*_j$ in probability, as desired.
\end{proof}

\begin{proof}[Proof of Theorem~\ref{thm:degroot_indep_opt}]
For this distribution, we wish to find weights $\tilde{w}$ solving the following:
\begin{equation*}
    \tilde{w} := \argmin_{w \in \mathbb{R}^k, 1^\top w = 1} \mathbb{E}_{(\tilde{X}, \tilde{Y})}\left[\left(\tilde{Y} - \sum_{k=1}^K w_k {f}_k^*(\tilde{X})\right)^2\right].
\end{equation*}
Expanding the right side, we have
\begin{align*}
    \mathbb{E}_{(\tilde{X}, \tilde{Y})}\left[\left(\tilde{Y} - \sum_{k=1}^K w_k {f}_k^*(\tilde{X})\right)^2\right] &= \mathbb{E}_{(\tilde{X}, \tilde{Y})}\left[\left(\sum_{k=1}^K w_k (\tilde{Y} - {f}_k^*(\tilde{X}))\right)^2\right] \\
    &= \textrm{Var}\left(\sum_{k=1}^K w_k (\tilde{Y} - {f}_k^*(\tilde{X}))\right) \\
    &=  \sum_{k=1}^K \textrm{Var}\left( w_k (\tilde{Y} - {f}_k^*(\tilde{X}))\right) \\ 
    &\quad+ 2 \sum_{1 \le k_1 < k_2 \le K} \textrm{Cov}\left( w_{k_1} (\tilde{Y} - {f}_{k_1}^*(\tilde{X}),  w_{k_2} (\tilde{Y} - {f}_{k_2}^*(\tilde{X}))\right) \\
    &=  \sum_{k=1}^K w_k^2 \textrm{Var}\left(\tilde{Y} - {f}_k^*(\tilde{X})\right) \\ 
\end{align*}
With this expression, one can easily verify that the optimal $w$ is then
\begin{equation*}
    \tilde{w}_k = 1 / \textrm{Var}\left(\tilde{Y} - {f}_k^*(\tilde{X})\right).
\end{equation*}

Lastly, we claim that
\begin{equation*}
    \textrm{Var}\left(\tilde{Y} - {f}_k^*(\tilde{X})\right) \to \textrm{MSE}^*_k
\end{equation*}
as $\delta \to 0$. This follows from the decomposition
\begin{equation*}
    \textrm{Var}\left(\tilde{Y} - {f}_k^*(\tilde{X})\right) =  \E \left[ \textrm{Var}\left(\tilde{Y} - {f}_k^*(\tilde{X}) \mid \tilde{X}\right)\right]  + \textrm{Var}\left(\E[\tilde{Y} - {f}_k^*(\tilde{X}) \mid \tilde{X}]\right).
\end{equation*}
By assumption, the conditional mean and variance are continuous functions of $x$, so the first term in the sum converges to $\textrm{MSE}^*_k$ as $\delta \to 0$. Likewise, the second term in the sum converges to $0$. This completes the proof.

\end{proof}

\section{Estimating the Standard Error}
\label{app:standard_errors}

In this section, we develop a standard errors estimator for the collective prediction $p^*(x')$ resulting from Algorithm~\ref{alg:DG}. Our proposal is a special case of jackknife standard error estimates, where we consider the agents to be the independent samples from a super-population of agents (this probabilistic interpretation is useful to precisely discuss standard errors but is not needed elsewhere in our work). Our error bars will thus measure how stable the consensus prediction is to the observed collection of agents. In the collective prediction setting, this is an attractive target of inference because it can be estimated without requiring any assumptions about how the agents gather data, their models and training procedure.  As a result, our standard error estimate is entirely agnostic to the behavior of the agents,  is decentralized, and requires minimal communication---it requires no additional model queries above what was already done in Algorithm~\ref{alg:DG}.

\begin{algorithm}[!h]
\begin{algorithmic}[1]
\caption{DeGroot Jackknife}\label{alg:DG_jackknife}
\STATE \textbf{Input:} Pre-trained $K$ agents $f_1, \cdots, f_K$ with local training dataset $\mathcal D_k, k \in [K]$; \\ neighborhood size $m$; test point $x'$.\\[1ex]
\STATE Construct trust matrix $T$ as in Algorithm~\ref{alg:DG}.\\[1ex]
\FOR{$i=1,2,\ldots, K$}
\STATE Create submatrix $T^{(i)}$ \\
\STATE Find $v$ such that $v T^{(i)} = v$ by power iteration.\\
\STATE Form collective prediction $p^{*}_{-i}(x') = \sum_{j\ne i} v_j {f}_j(x')$ \\
\ENDFOR\\[1ex]
\STATE \textbf{Return:} Jackknife estimate of standard error at $x'$: 
$$\widehat{\mathrm{SE}}(x') = \sqrt{\frac{K-1}{K} \sum_{i=1}^K \left(p^{*}_{-i}(x') - \bar{p}^*(x')\right)^2},$$
where $\bar{p}^*(x') = \frac{1}{K} \sum_{i=1}^K p^{*}_{-i}(x')$ is the average delete-one prediction at $x'$.
\\[1ex]
\end{algorithmic}
\end{algorithm}

Turning to the details, recall that $T$ is the matrix of trust scores, and let $T^{(i)}$ be the principal submatrix of $T$ formed by deleting row $i$ and column $i$, renormalized so that each row sums to one. That is, $T^{(i)}$ is the trust matrix if agent $i$ is removed from our collection of agents. The idea is that we run DeGroot aggregation with agent $i$ deleted; i.e., we find the collective prediction $p^{*}_{-i}(x')$ of the remaining $K-1$ agents. Then, by looking at the variability of these predictions with different agents removed, we can quantify the stability of the procedure. Specifically, we take the sample standard deviation of these quantities, scaled by $\sqrt{(K-1)^2/K}$: the scaling prescribed by the theory of the jackknife estimator~\citep{Efron1993}. We state this procedure formally in Algorithm~\ref{alg:DG_jackknife}.

Importantly, this calculation does not require any additional model evaluations above what is required to perform the consensus prediction; this algorithm only requires constructing the matrix $T$ and knowing the predictions $f_j(x')$, which were already needed in Algorithm~\ref{alg:DG}. Thus, estimating the standard error has no additional information-sharing requirement; it requires only a modest extra amount of computation.

See Figure~\ref{fig:jack} for an empirical demonstration of these error bars on our synthetic example from Section~\ref{sec:synthetic}.

\section{Details on algorithm implementation}

In this section we discuss some practical considerations when implementing Algorithm~\ref{alg:DG}. 

\textit{Step 4.} For our experiments we have chosen the Euclidean distance metric, and a fixed number $N$ of datapoints to determine $\mathcal{D}_i(x')$. To find the $N$ nearest neighbors we have used the \texttt{sklearn} unsupervised \texttt{NearestNeighbors} classifier from \texttt{sklearn.neighbors}. Note that this step could be adjusted to use other distance metrics specific to the modality of the data. In particular, for higher dimensional data such as images a careful choice of distance metric will become crucial.

\textit{Step 9.} In this step the agents iteratively refine their predictions by pooling other agents predictions to reach a consensus, as described in Section~\ref{sec:DG}. These pooling iterations can be executed in a fully decentralized manner, or the consensus weights can be evaluated via the power method at one of the nodes that gets access to all the trust scores in order to construct the matrix $T$ from $\{\tau_{i,j}\}_{i,j\in[K]}$. The best implementation will depend on the requirements of your application. For our experiments we have chosen the centralized evaluation for convenience. 
Around $t_p=30$ power iterations are usually far sufficient to reach a consensus. Given that the number of agents is usually small ($<1000$) this operation comes with little computational cost and we recommend to pick the number of iterations $t_p$ with sufficient margin to take advantage of the full potential of DeGroot aggregation.
Alternatively, one might also implement a stopping criteria in the form of $|p_j^{(t)}-p_j^{(t-1)}|\leq \text{tol}$ that is checked independently by each agent $j\in[K]$ after every round. 

\emph{Step 10.} For every test point the algorithm returns the consensus prediction $p^*(x')$. If Step 9 has fully converged we have $p_j=p^*$ for all $j$ and we can return any of the individual agents predictions. However, to get a robust procedure, even if the individual beliefs have not fully converged, we return $p^*=\frac 1 K \sum_j p_j$ instead.

\section{Additional Experimental Details and Evaluations}

The code is written in Python and for model training, as well as the nearest neighbor computation we use the built-in functionalities of Sklearn~\citep{scikit-learn}. 

\subsection{Evaluation on Synthetic Data}
\label{app:synthetic}
For the synthetic setup the true label is determined by a logistic function, as described in Section~\ref{sec:synthetic}, where we choose $\alpha = [1,1]$. The features of agent $k$ are distributed according to a multi-variate Gaussian distribution with mean $\mu_k$ and each model then fits a linear function to it's training data. We use the following default experimental configuration: we choose the means as  $\mu_1=[\text{-}3,\text{-}4]$, $\mu_2=[\text{-}2,\text{-}2]$, $\mu_3=[\text{-}1,\text{-}1]$, $\mu_4=[0,0]$ $\mu_5=[3,2]$, the covariance of the local data as $\Sigma_k = \Sigma =\text{I}$ and the variance in the label noise as $\sigma_Y =0.1$.
The resulting local linear fits are illustrated in Figure~\ref{fig:linearmodels} where we see that individual models approximate the true labeling function in different regions of the input space. The accuracy as a function of $\xi$ is illustrated in Figure~\ref{fig:teaser} in the introduction.

\begin{figure}[h!]
    \centering 
    \includegraphics[width=0.6\textwidth]{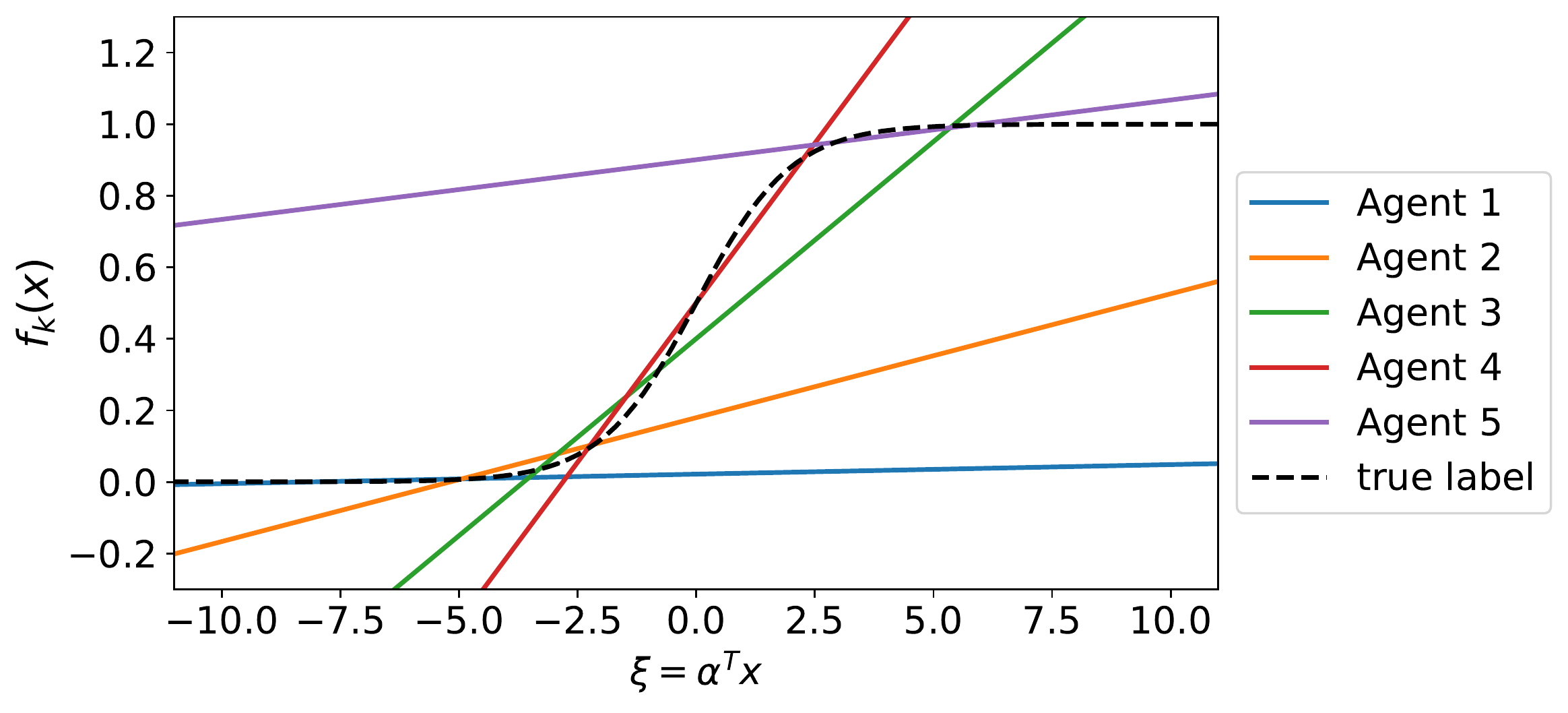}
    \caption{\emph{(agents' local models)}. Local linear models fit by individual agents for the default experimental configuration.  } 
    \label{fig:linearmodels}
\end{figure}

\subsubsection*{Additional Evaluations:}
\textit{A) Sensitivity of DeGroot to hyperparameter $N$.} In Figure~\ref{fig:nn} we demonstrate how the performance of Algorithm~\ref{alg:DG} in this synthetic setting changes with the number of neighbors $N$ used for local validation for two different levels of label noise. Overall we find that the performance of DeGroot is not very sensitive to the choice of $N$, whereas the optimal regime of values for $N$ increases with the noise in the data. In general, we recommend choosing $N$ to corresponds to approximately $1-10\%$ of the available data, with a hard lower-bound on $N$, bounding it away from $1$. For later investigations on real data we pick $N$ to be $1\%$ of the partition size.

\begin{figure}[t!]
    \centering
    \subfloat[$\sigma_Y=0.1$]{\includegraphics[width=0.48\textwidth]{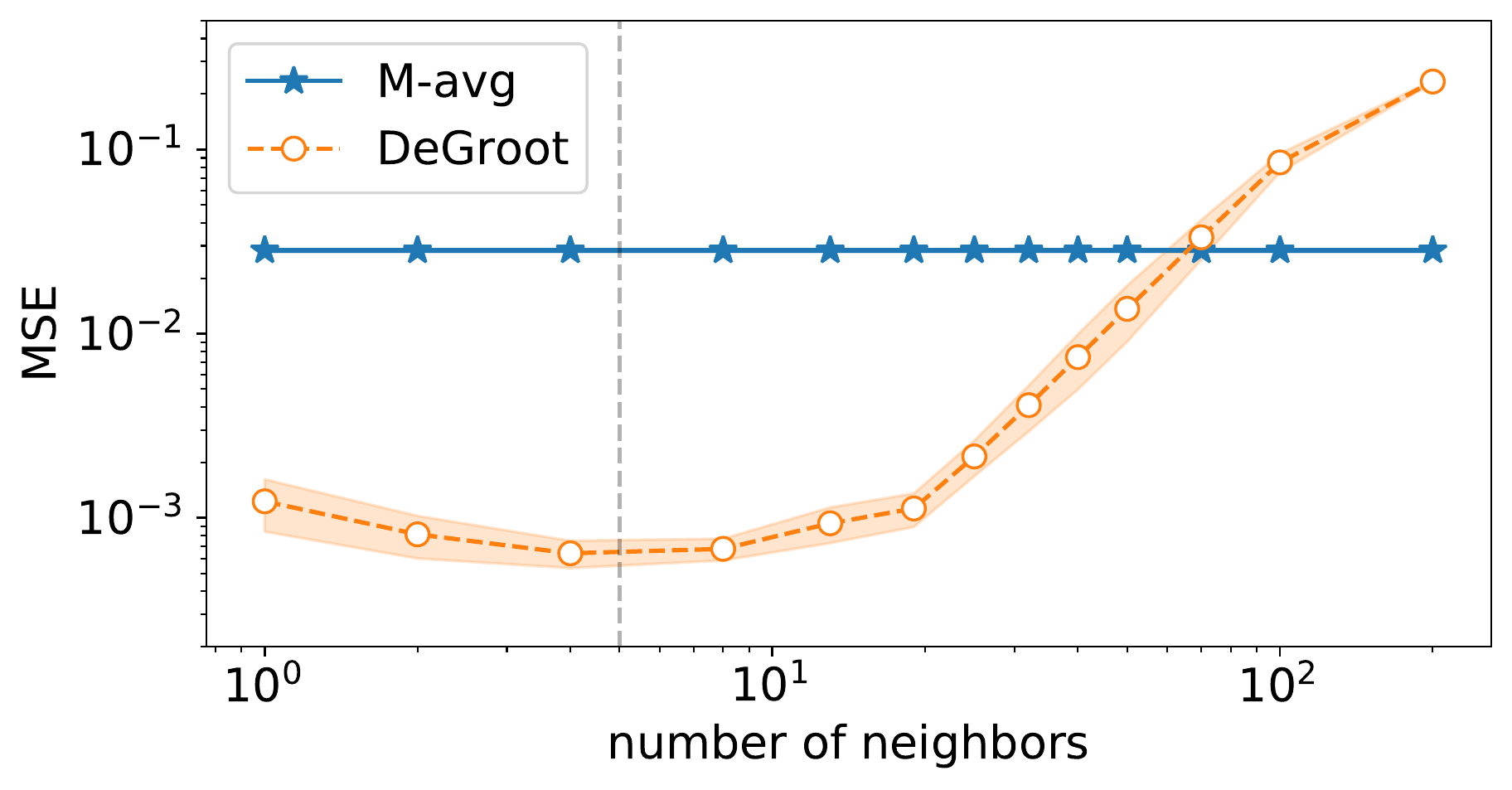}}\hfill
    \subfloat[$\sigma_Y=0.3$]{\includegraphics[width=0.48\textwidth]{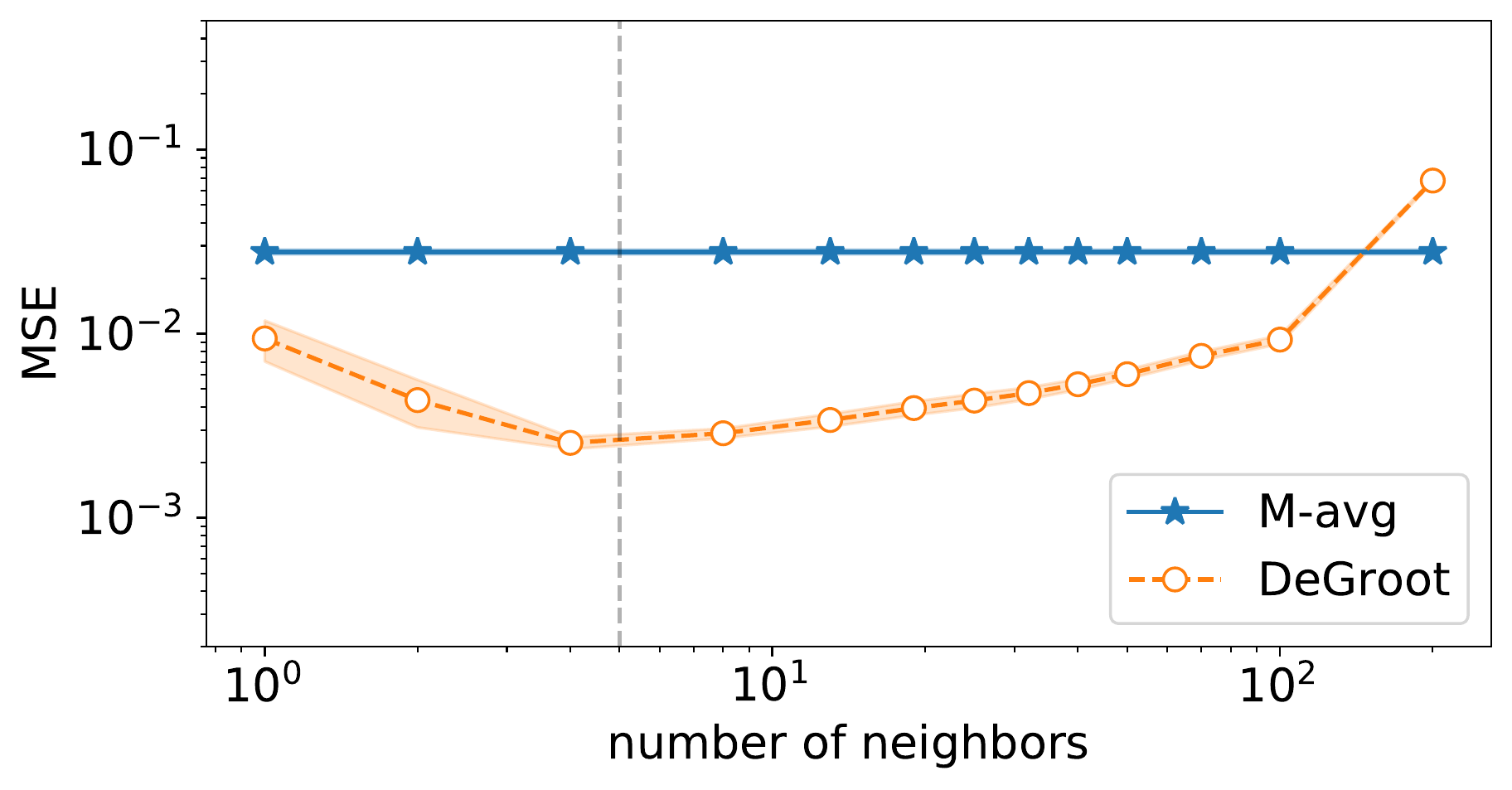}}
     \caption{\emph{(Performance vs. number of neighbors)}. Synthetic data experiment comparing DeGroot to model averaging (M-avg) for two different noise levels ($\sigma_Y$) in the training data labels. Gray dashed line indicate the default value in our experiments.}
    \label{fig:nn}
\end{figure}

\textit{B) Varying overlap in local data partitions.} In Figure~\ref{fig:cov} we investigate how the performance of DeGroot changes in comparison to M-avg for varying overlap in the local data partitions. Therefore we vary the covariance $\Sigma_k = \text{I}\sigma^ 2$ in the feature distribution of the individual agents, while keeping the means $\mu_k$ fixed. We find that for larger values of $\sigma^2$ there is less to gain for DeGroot since models are less specialized, although the gains remains significant up to $\sigma^2=10$. We further see that if the variance in the data is very small and the local datasets are not sufficiently covering the feature space, the local cross-validation procedure and the adaptivity of the DeGroot procedure start to suffer.

\begin{figure}[h!]
    \centering
\includegraphics[height=0.3\textwidth]{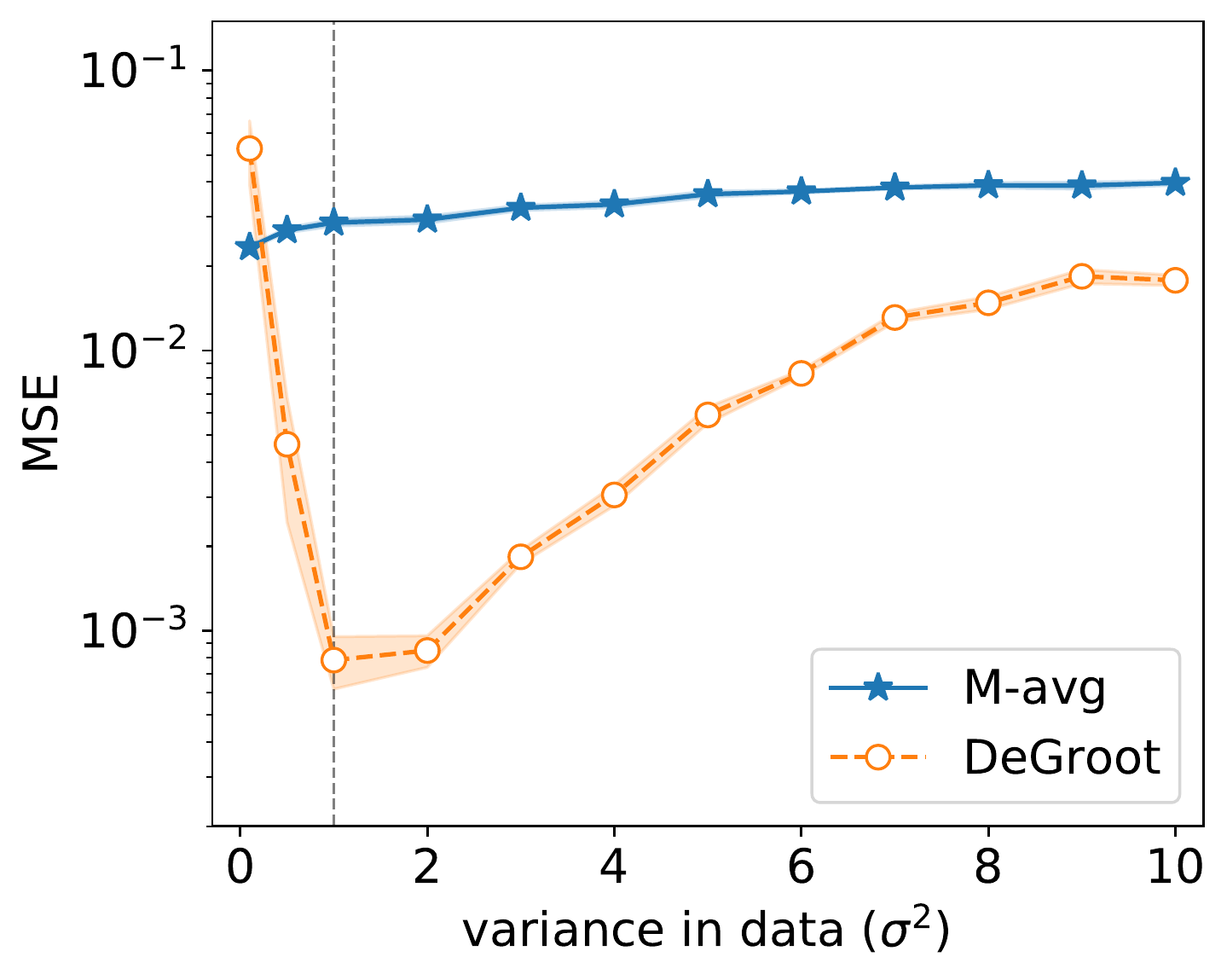}
     \caption{\emph{(Varying overlap in data partitions)}. Comparing DeGroot to classical model averaging (M-avg) for varying covariance $\Sigma_k = \text{I}\sigma^ 2$ in the feature distribution, keeping the means $\mu_k$ fixed. Gray dashed line indicate the default value in our experiments.}
    \label{fig:cov}
\end{figure}

\textit{C) Alternative pooling operations.} In Figure~\ref{fig:alternative} we aim to provide additional insights into the effect of DeGroot's iterative procedure to find consensus and aggregate the individual predictions. In Figure~\ref{fig:consensus} we have shown how the accuracy of individual agents improves with the number of pooling iterations in the DeGroot procedure. After around $20$ iterations the agents have reached consensus. In Figure~\ref{fig:alternative} we compare the accuracy of the predictions obtained through DeGroot to alternative ways of aggregating the local MSE values and trust scores into a single weight vector. These baselines are constructed for diagnostic purpose to demonstrate the value of using power iterations instead of an alternative procedure.

The first method, denoted $\tau$-avg takes the mean of the trust scores $\tau_{ij}$ across all agents $i$ to obtain the weights $w_j$ of  agent $j$ on the final prediction. As shown in Figure~\ref{fig:alternative-a} $\tau$-avg performs significantly worse than DeGroot on our synthetic example, and achieves an overall MSE of $9e^{-3}$ which is $20\times$ larger than DeGroot. This shows that the DeGroot agents find a better collective prediction through iterative pooling than they would by averaging their beliefs after only a single round of pooling.

The second method first averages the local MSE values of each agent across the different datasets as $\bar{\text{MSE}}_j = \sum_i \text{MSE}_{i,j}$ and then obtains the weights through inverse-MSE weighting as $w_j = 1 /{\bar{\text{MSE}}_j}$. Similar to $\tau$-avg, this alternative procedure performs significantly worse than DeGroot, and acheives an MSE of $1e^{-2}$.

\begin{figure}[t!]
    \centering
\subfloat[$\tau$-avg aggregation]{\includegraphics[width=0.49\textwidth]{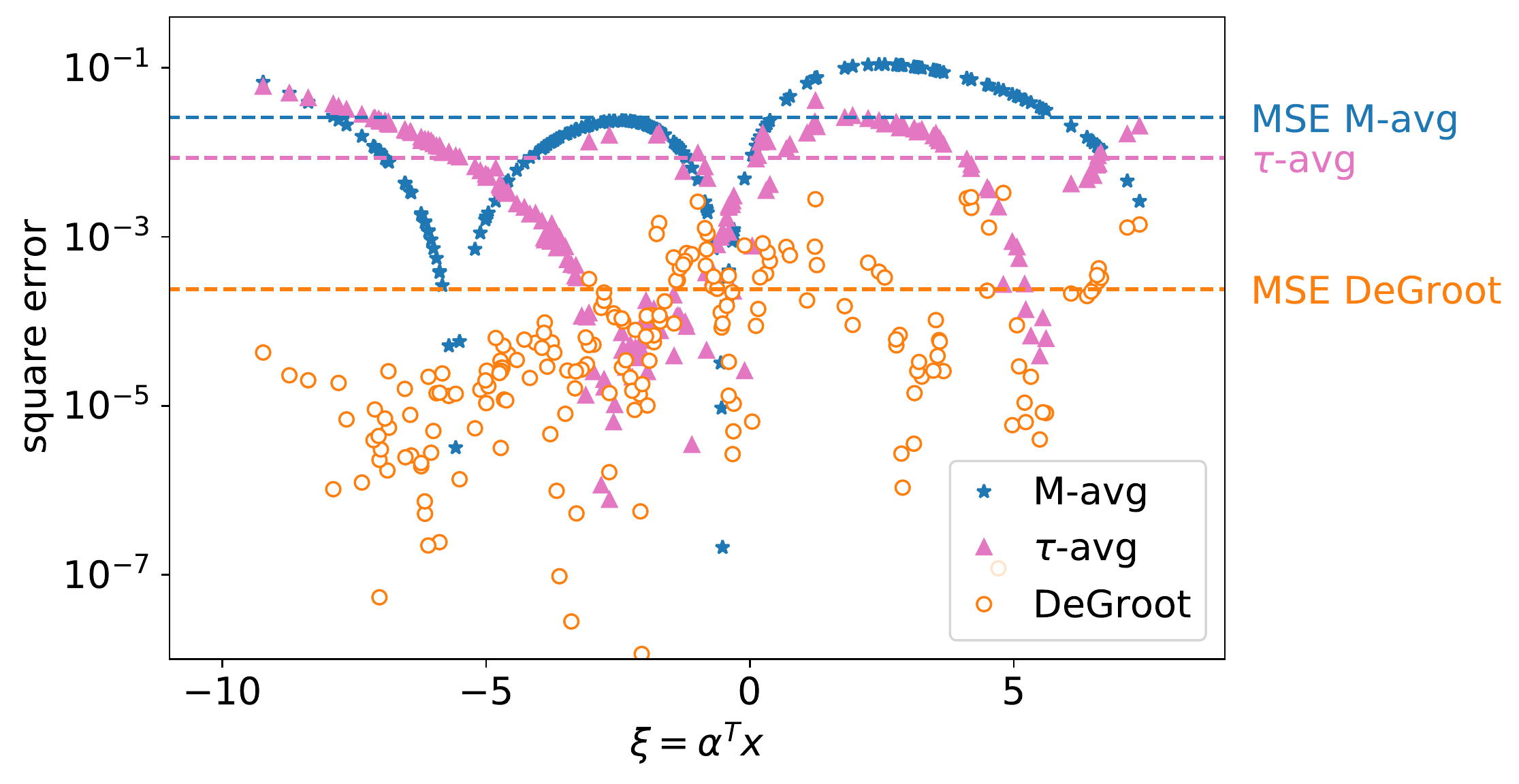} \label{fig:alternative-a}}
\subfloat[MSE-avg aggregation]{\includegraphics[width=0.49\textwidth]{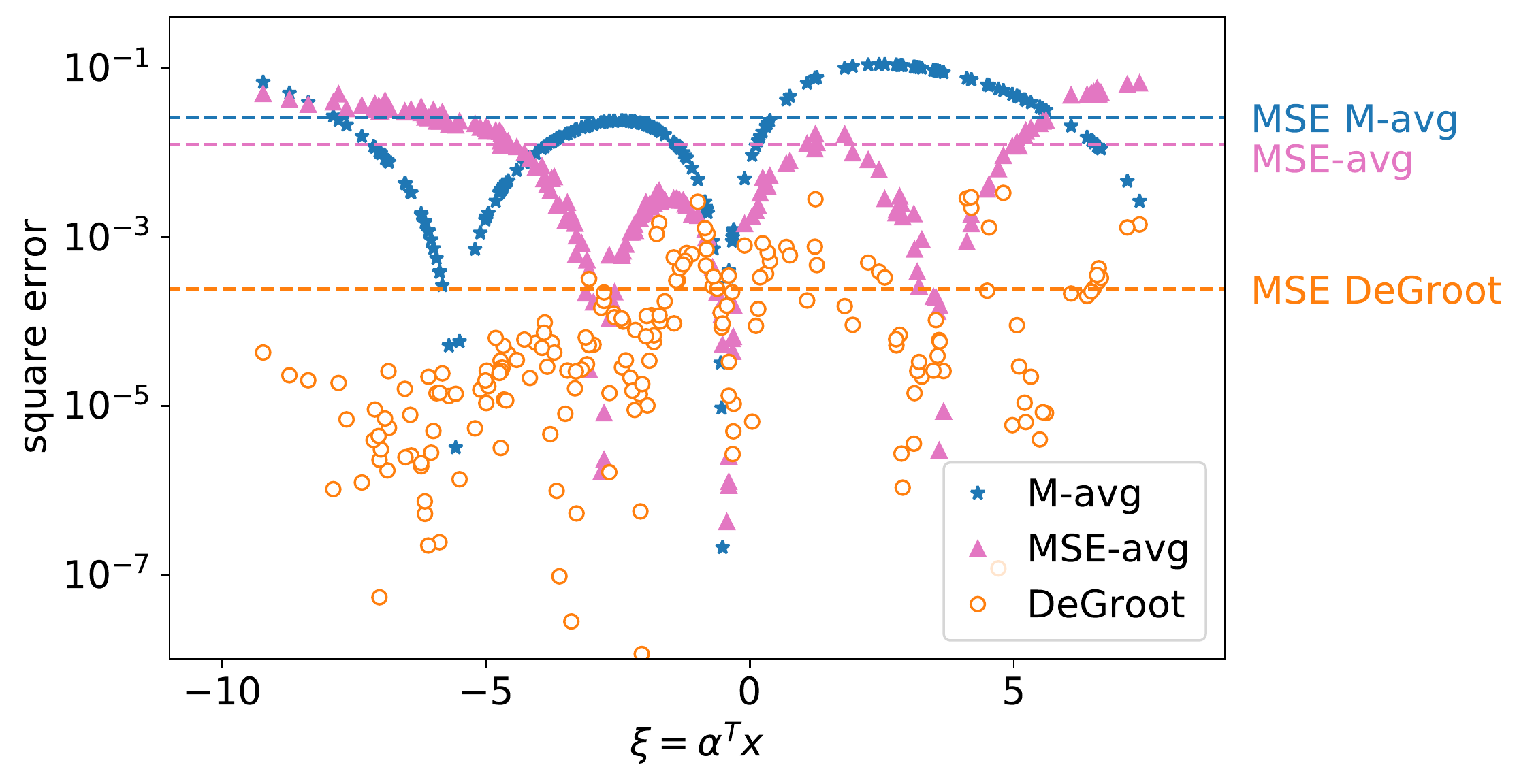}\label{fig:alternative-b}} 
\caption{\emph{(Alternative aggregation schemes)}. (a) Comparison of DeGroot to $\tau$-avg that average the trust scores across all agents to obtain the weights of the individual agents. (b) Comparison of DeGroot to MSE-avg that performs inverse-MSE weighting based on the aggregated MSE values cross the local datasets. Experiments serve for diagnosis purpose and are conducted on the synthetic setup outlined in Section~\ref{sec:synthetic}.}
    \label{fig:alternative}
\end{figure}

\textit{D) DeGroot Jackknife.} Finally, in Figure~\ref{fig:jack}, we evaluate the error bars for our predictions using the Jackknife procedure proposed in Algorithm~\ref{alg:DG_jackknife} for the test points evaluated in Figure~\ref{fig:testerror}.  Here, we report the predictions as a function of $\xi$, as well as the standard error \eqref{eq:SE}. We find that the standard errors are relatively small in the center, but increase at the edges of the space. This makes sense: on the edges of the space there is only one agent with good prediction accuracy, so the final consensus prediction is not stable to the deletion of any one agent. To validate this explanation further, in the right panel we increase the spread of the training points for each agent, so that the agents' training data overlap more. As expected, the standard errors are now much smaller---because there are multiple agents with training data in most regions, the procedure is more stable.

\begin{figure}[t!]
    \centering
\subfloat[$\sigma^2 = 1$]{\includegraphics[width=0.48\textwidth]{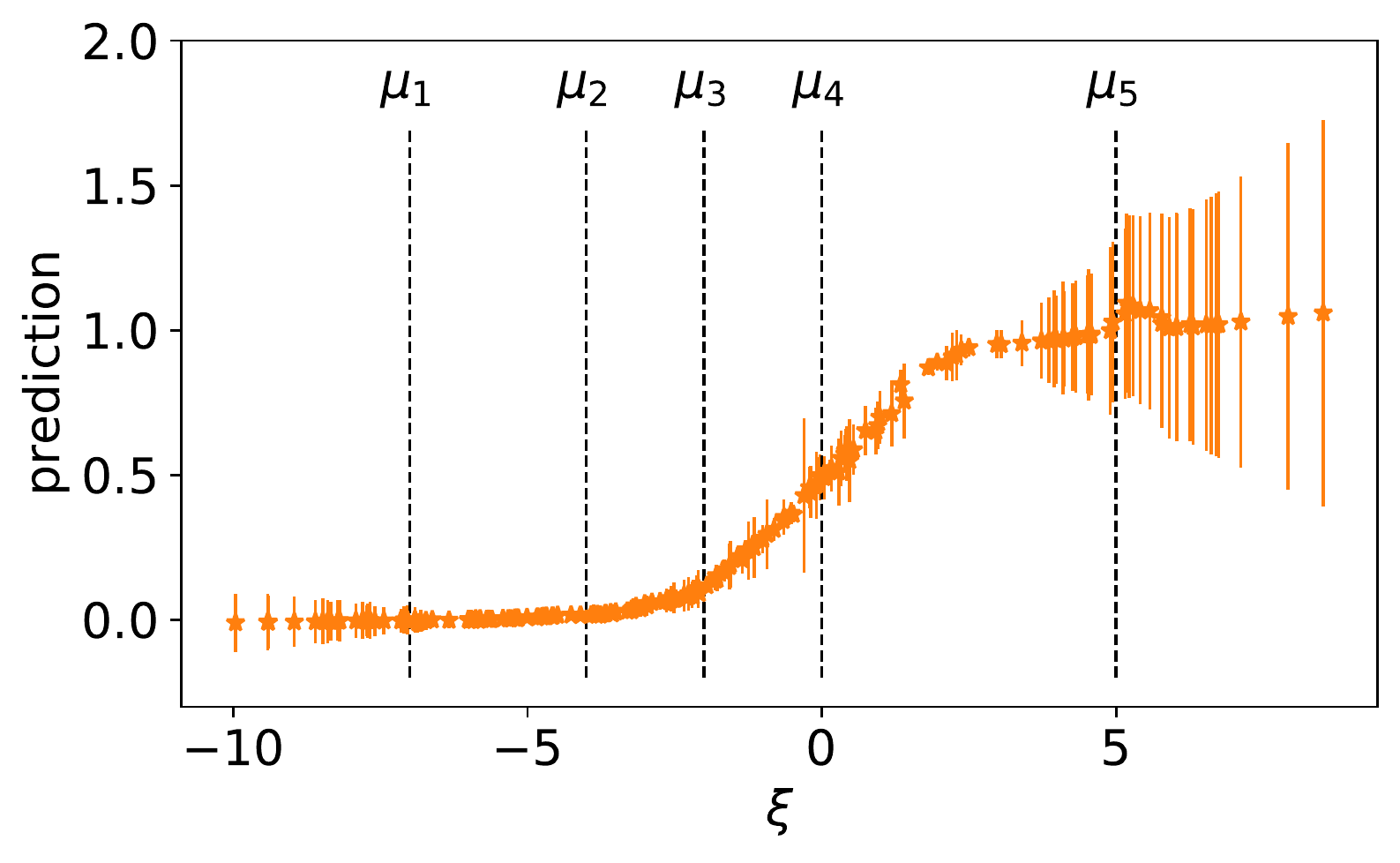} }
\subfloat[$\sigma^2 = 5$]{\includegraphics[width=0.48\textwidth]{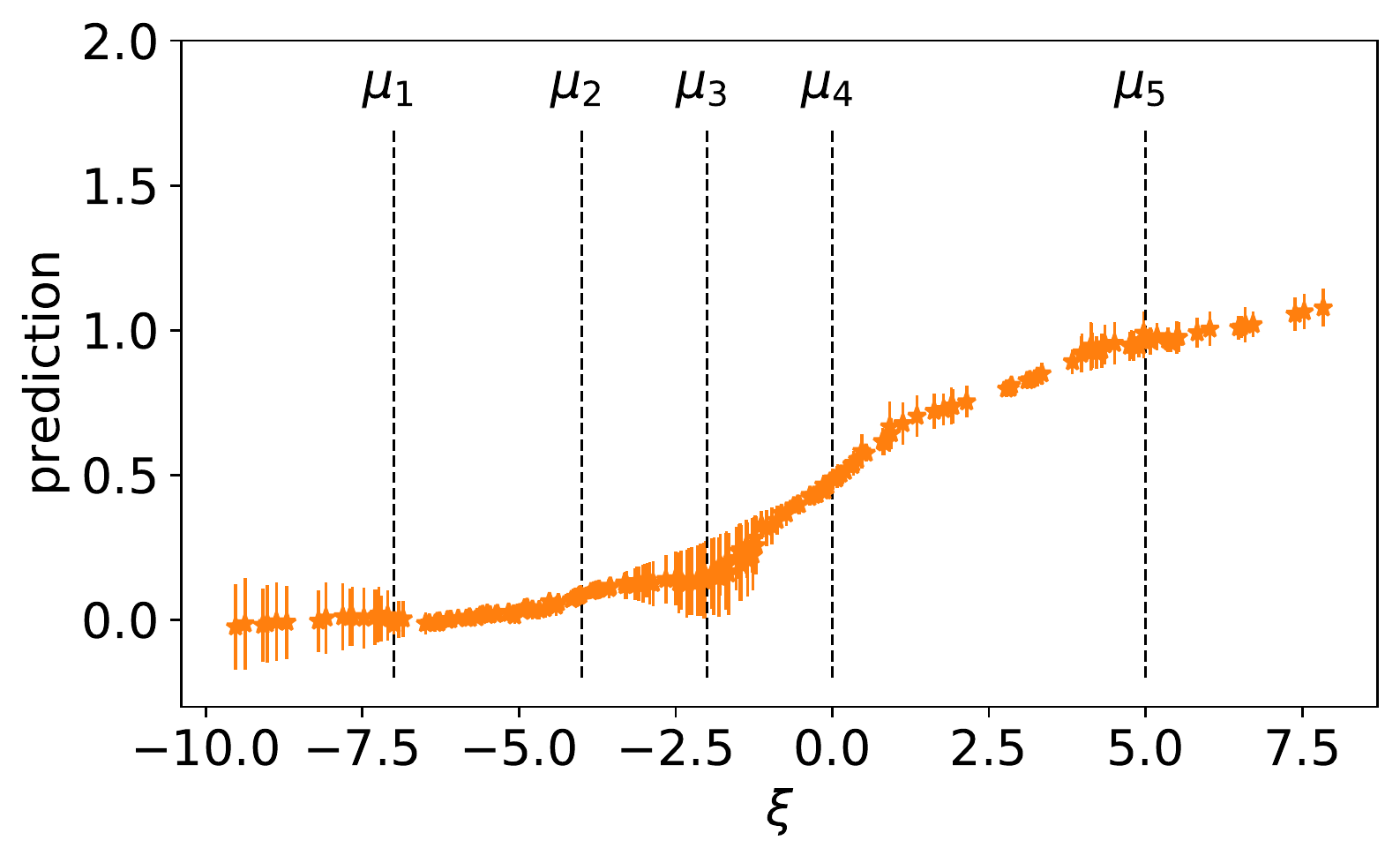} }
\caption{\emph{(DeGroot Jackknife)}. Evaluations of error bars using the decentralized Jackknife procedure proposed in Algorithm~\ref{alg:DG_jackknife}. The test points correspond to the same 200 test points visualized in Figure~\ref{fig:testerror}. For (a) we use the standard configuration, and for (b) we increase the covariance of the local training datasets by a factor of $5$, i.e. $\Sigma_k = 5 \text{ I}$.}
    \label{fig:jack}
\end{figure}

\subsection{Evaluation on Real Data}
\label{app:realdata}

\textit{Datasets.} The specifications of the datasets used for our experiments in Section~\ref{sec:realdata} can be found in Table~\ref{tab:datasets}. The datasets have been downloaded from libsvm~\citep{libsvm} and are used without any additional preprocessing, apart from the boston dataset that is normalized for neural network training. The boston housing price data~\citep{Dua:2019} and the E2006 datasets~\citep{E2006dataset} have continuous outcome variables. Abalone~\citep{Dua:2019} has integer values outcomes from $1-29$, YearPrediction~\citep{Dua:2019} has integer valued year numbers from $1922-2011$, and the labels of cpusmall correspond to integers from $0-99$. 

\begin{table}[h!]
    \centering
    \vspace{-0.1cm}
    \caption{Regression datasets used for evaluation, downloaded from \citep{libsvm}.\vspace{0.3cm}}
    \begin{tabular}{l l c c}
    \toprule
                 & &\# samples & \# featues \\
                 \midrule
         &Boston &506&13\\ 
         &E2006 &16087& 3308\\ 
         &Abalone &4077&8  \\
         &cpusmall &8192& 12\\ 
         &YearPrediction &463715& 90\\
         \bottomrule
    \end{tabular}
    \label{tab:datasets}
\end{table}

\subsubsection*{Details on Benchmark Study in Table~\ref{tab:benchmark}:} 
\textit{Baselines.} We compare DeGroot to three baselines: M-avg, CV-static, CV-dynamic. The three methods describe different approaches to determine the weight $w_j$ of each model on the final prediction $p(x')= \sum_j w_j f_j(x')$. 
(M-avg) corresponds to the most natural baseline of equally weighted averaging with $w_j = \frac 1 K$ 
for all $j\in[K]$. This baseline that is optimal if models are unbiased and have equal variance. 
(CV-static) and (CV-adaptive) are two methods that have access to additional hold-out data to evaluate the predictive accuracy $\text{MSE}_j$ of each model $j\in[K]$ and determine the weights $w_j$. As the name says (CV-static) uses static weights, and (CV-adaptive) uses an adaptive weighting scheme. Similar to DeGroot both methods use weights that are inversely proportional to the MSE of the models:
\[w_j = \frac{1/\text{MSE}_j}{\sum_j 1/\text{MSE}_j}\]
In CV-static $\text{MSE}_j$ is evaluated on the hold-out data, and in CV-adaptive $\text{MSE}_j$ is evaluated on a subset of the hold-out data, composing of the $N$ closest points to $x'$. We use the same distance measure, and number of neighbors $N$ as we use to perform cross-validation on the individual agents in DeGroot. Our goal is to minimize potential confounding for a most meaningful comparative investigation.

\textit{Evaluation.}
In all experiments the data is first randomly partitioned into train,test and validation data.  The validation data is completely ignored by DeGroot and M-avg and only used for CV-static and CV-adaptive. The size of the hold-out data is equal to the size of one data partition. We use $n_{test}=\max(0.15 n, 500)$ test samples. 
For every random split all methods are evaluated on the same set of local models.
We tune the hyparparameter of each model using a rough hyperparameter search upfront, but our main goal is not to get optimal accuracy with each individual model, but rather to investigate how the different aggregation procedures can deal with (resonable) given models. Hyperparameters are reported in Table~\ref{tab:benchmark}. 

\textit{Models.}
For training of the models we use the built in model classes for scikit-learn~\citep{scikit-learn}. Ridge and Lasso are imported from \texttt{sklearn.linear\_model} and the hyperparameter $\lambda$ corresponds to the regularizer strength. For the DTR we use the \texttt{DecisionTreeRegressor} from \texttt{sklearn.tree} where the hyperparameter max\_depth corresponds to the maximum depth of the regression tree. For the neural network (NN) we use the \texttt{MLPRegressor} class from \texttt{sklearn.neural\_network}, it uses relu activation by default, and we choose two hidden layers with 8 neurons each, and $\alpha$ corresponds to the regularization strength. All other parameters are set to their default values. Whenever a NN model has not converged within the given number of iterations (500 for E2006 and 200 for boston) the model is nevertheless included in the ensemble since the purpose of this study is the evaluate how the different schemes deal with given models that might be heterogeneous in terms of quality.

\textit{Confidence Intervals.}
The reported confidence intervals are over the randomness in the data splitting and partitioning, they show one standard deviation from the mean.

\subsubsection*{Additional Evaluations:}

\textit{Heterogeneity.} 
We perform additional evaluations with the different approaches of achieving heterogeneity described in Section \ref{sec:realdata}. In Figure~\ref{fig:heter-a1} and Figure~\ref{fig:heter-a2} we show results for ridge regression on the abalone, boston and cpusmall dataset.
The gain of DeGroot over M-avg consistently grows with the degree of heterogeneity. 

\begin{figure}[h!]
    \centering
\subfloat[Abalone ($\lambda'=1$)]{\includegraphics[width=0.33\textwidth]{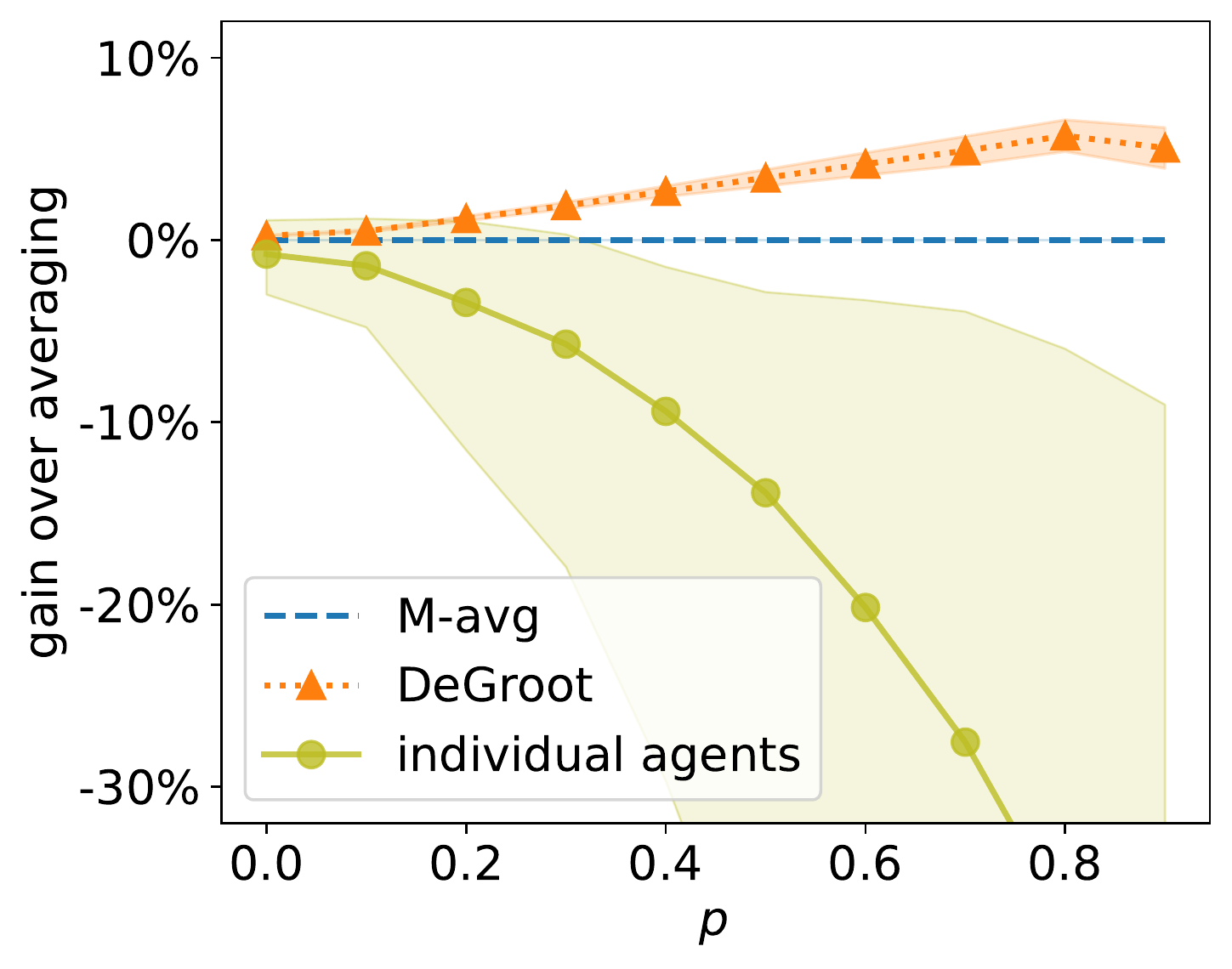}
    }
    \subfloat[Boston ($\lambda'=10^{-5}$)]{\includegraphics[width=0.33\textwidth]{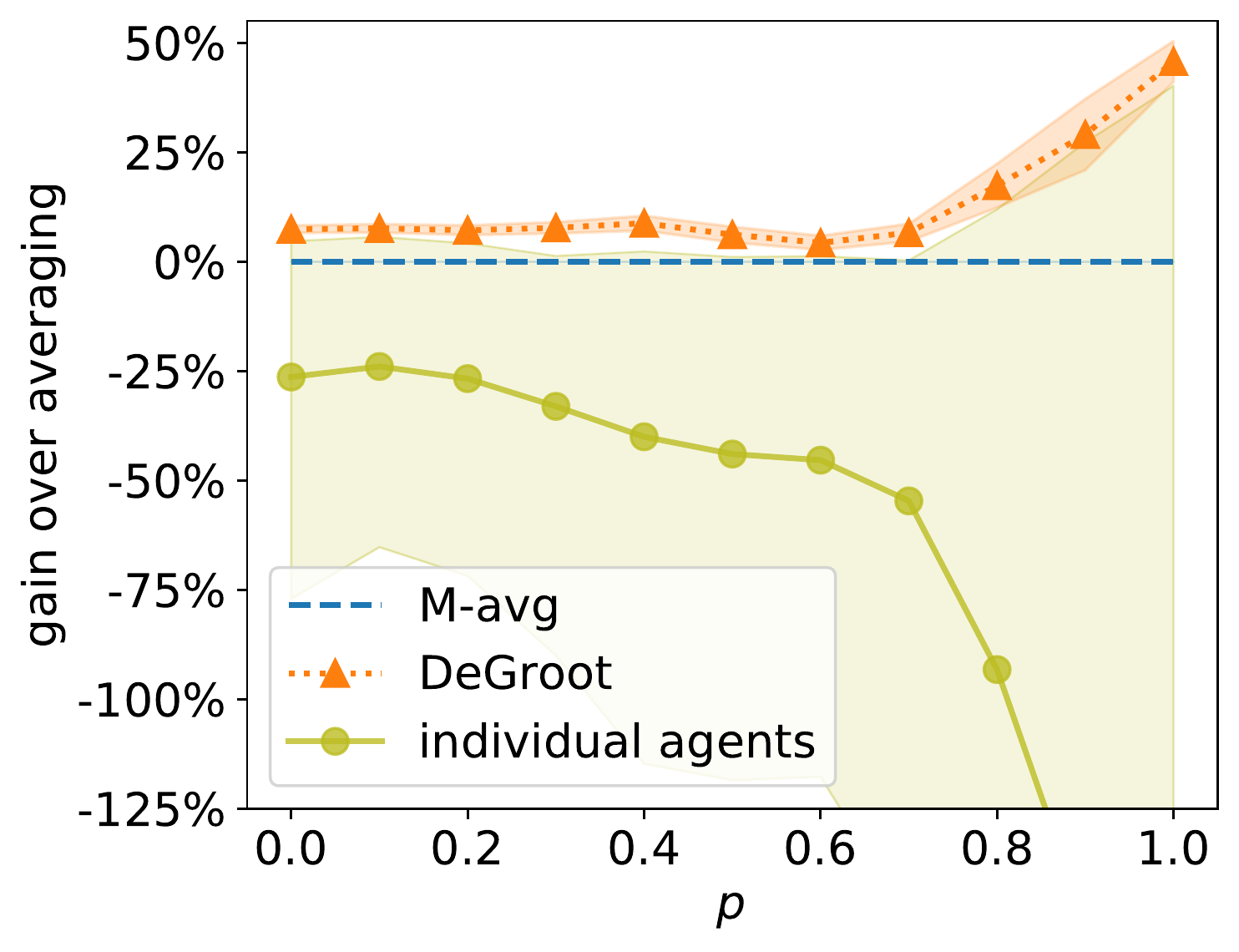}
    }
    \subfloat[Cpu-small ($\lambda'=1e^{-5}$)]{\includegraphics[width=0.33\textwidth]{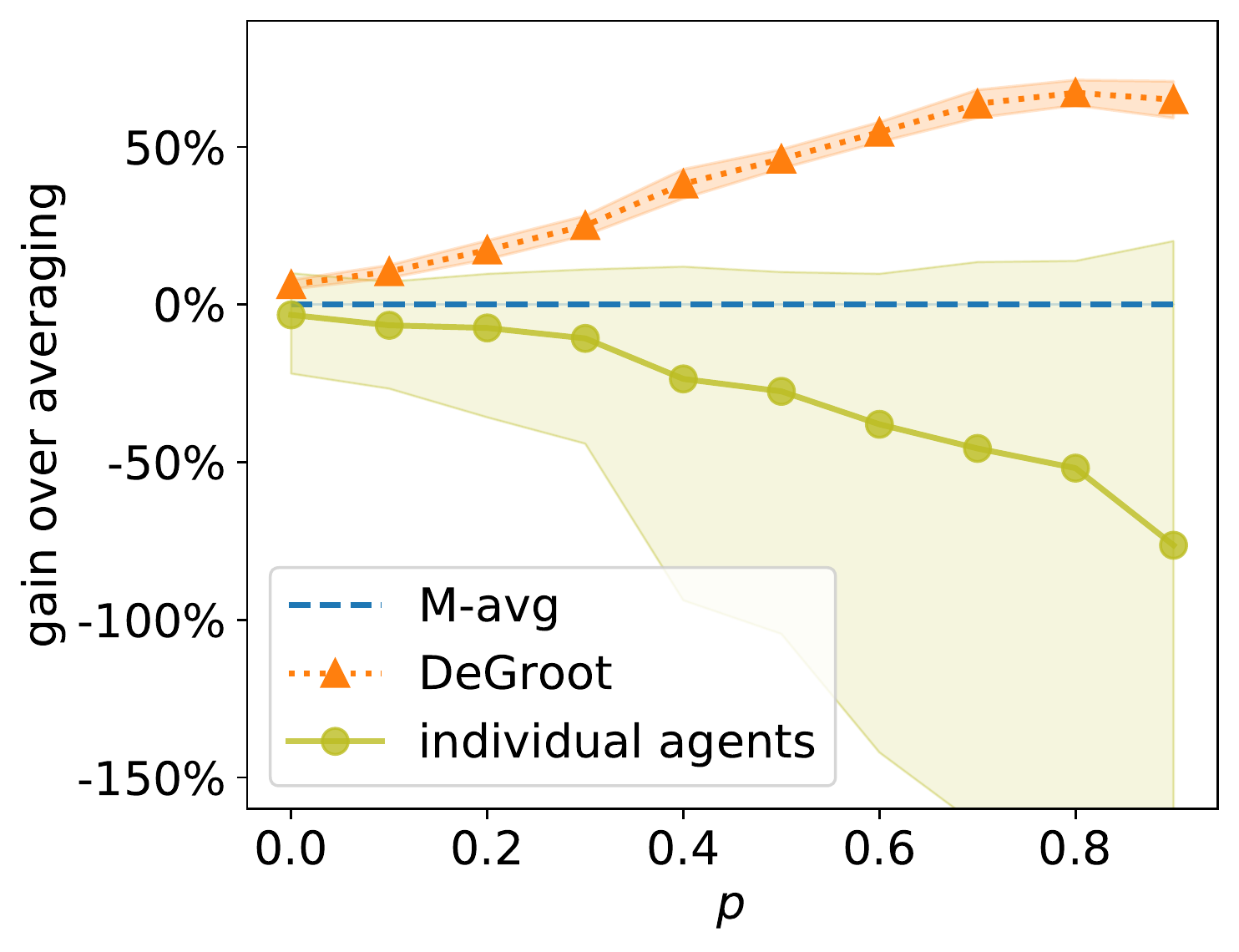}
    }
     \caption{\emph{(Label heterogeneity)}. Relative gain of DeGroot over M-avg. Same setting as Figure~\ref{fig:heter1} for ridge regression on different datasets. We depict the performance of the individual models in the ensemble using the shaded green area, the upper edge corresponds to the best and the lower edge to the worst performing model, the green line indicating the average performance across the models. \vspace{-0.2cm}}
    \label{fig:heter-a1}
\end{figure}

\begin{figure}[h!]
    \centering
\subfloat[Abalone ($\lambda'=1$)]{\includegraphics[width=0.33\textwidth]{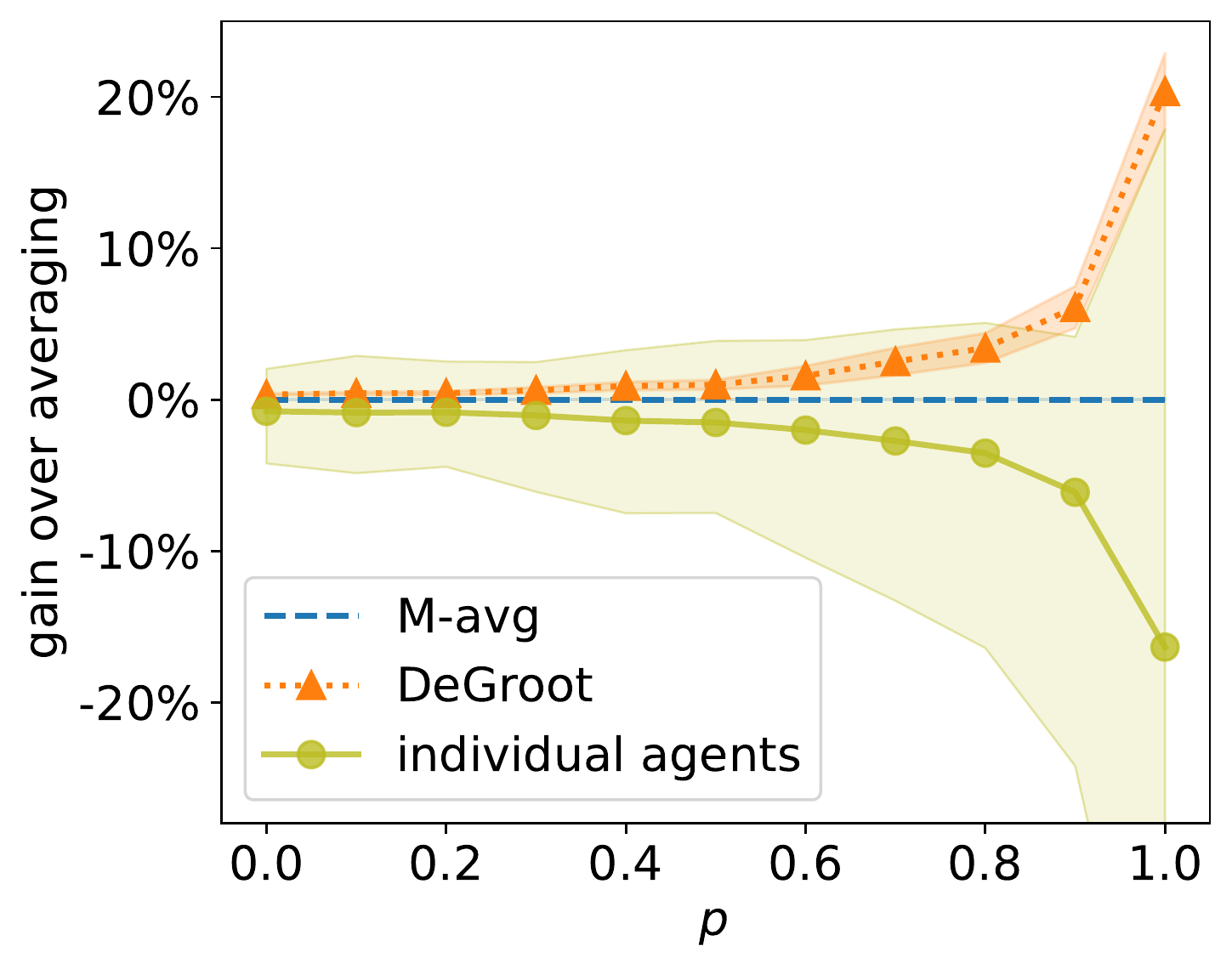}
    }
    \subfloat[Boston ($\lambda'=10^{-5}$)]{\includegraphics[width=0.33\textwidth]{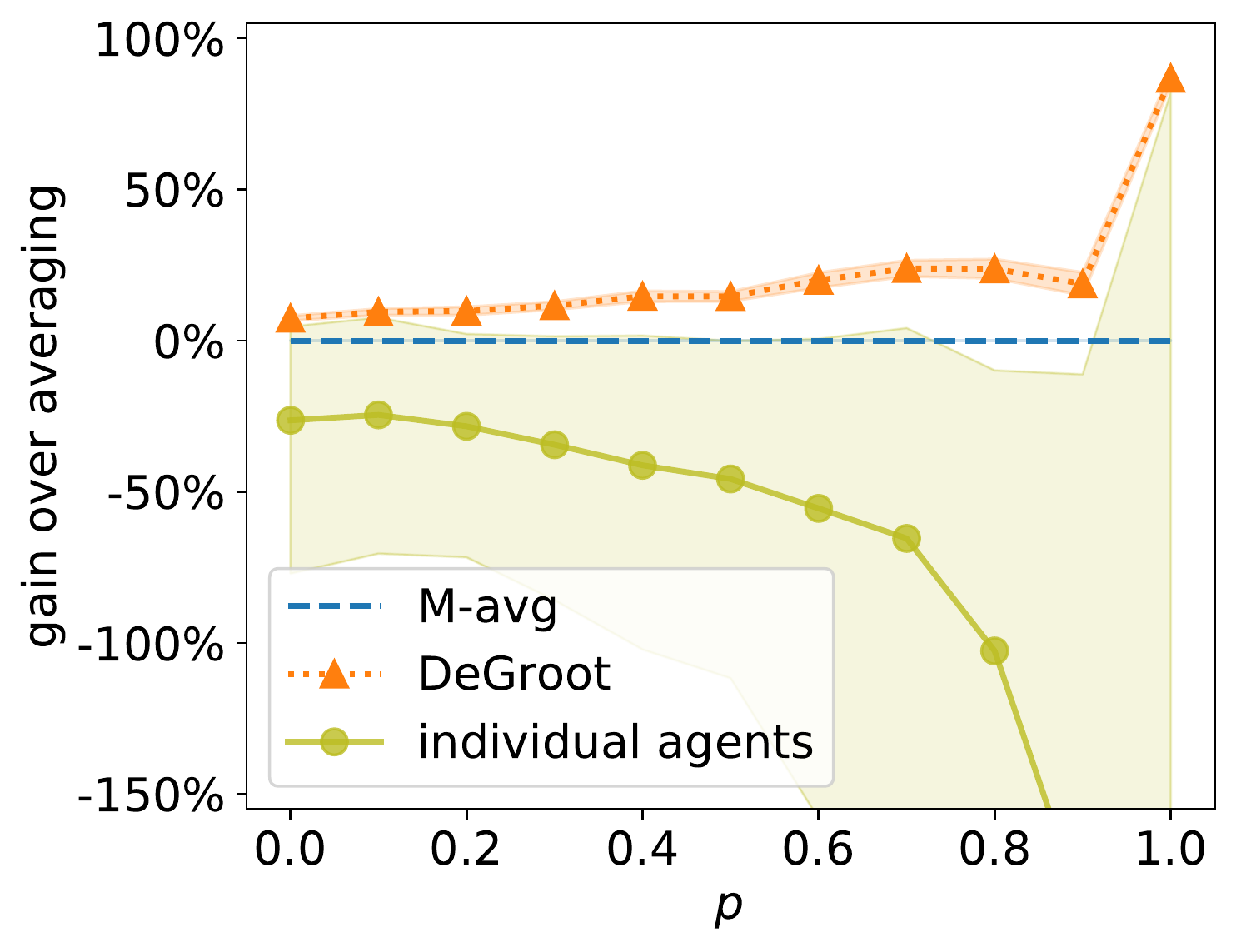}
    }
    \subfloat[Cpu-small ($\lambda'=1e^{-5}$)]{\includegraphics[width=0.33\textwidth]{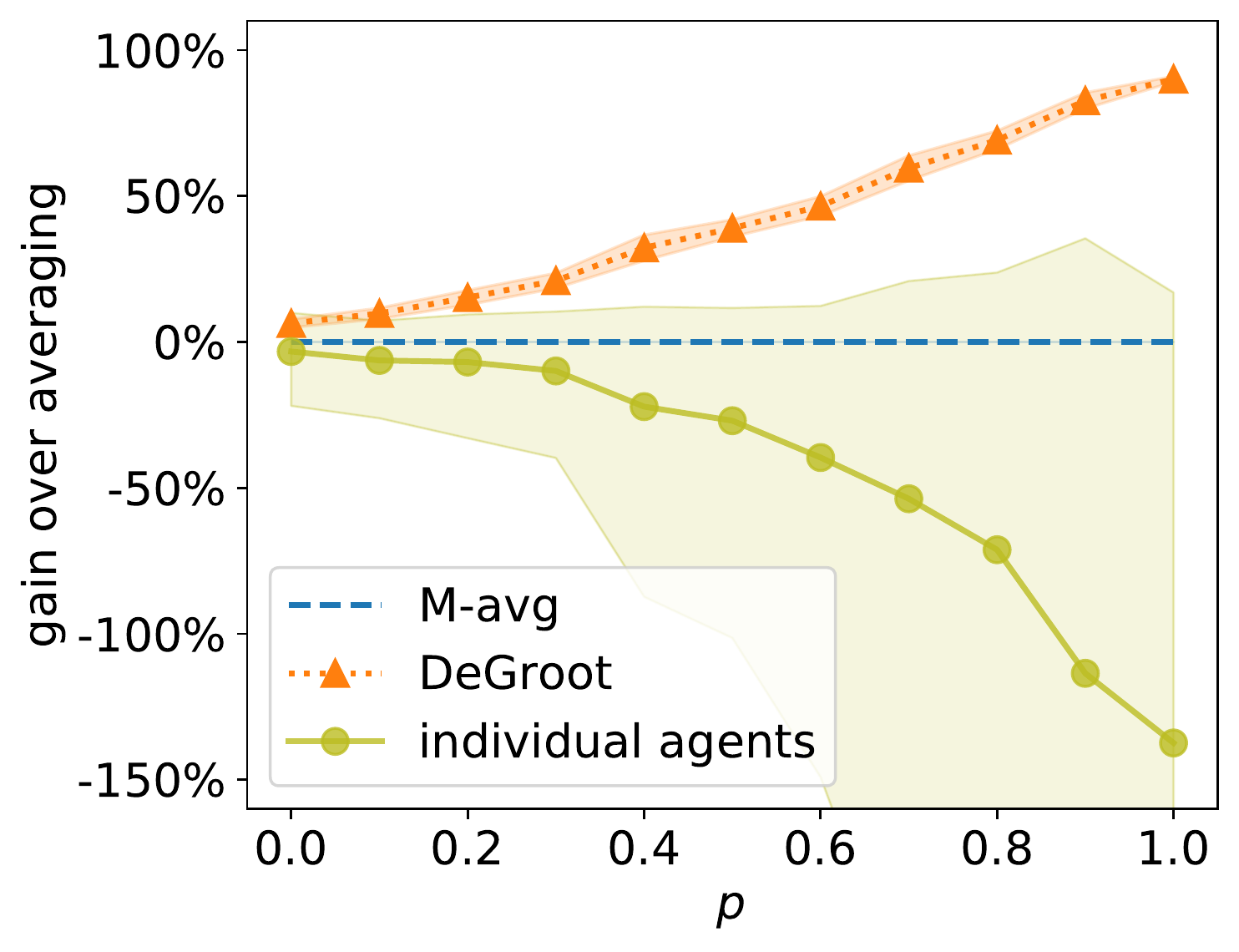}
    }
     \caption{\emph{(Feature heterogeneity)}. Relative gain of DeGroot over M-avg. Same setting as Figure~\ref{fig:heter3} for ridge regression on different datasets. We partially sort the training data along feature 8 (Abalone) feature 9 (Boston) and feature 11 (cpusmall). The performance of the individual models in the ensemble is depicted using the shaded green area, the upper edge corresponds to the best and the lower edge to the worst model, the green line indicating the average performance across the models. \vspace{-0.2cm}}
    \label{fig:heter-a2}
\end{figure}

\emph{Scalability.}
In Figure~\ref{fig:scaling} we investigate the scaling of the DeGroot method on two different datasets. We show results for 1 up to 128 agents. The training data (after test set hold-out) is partitioned across the agents by partially sorting the label with $p=0.5$. This means with increasing number of agents $K$ the partitions get smaller. We choose the number of neighbors in DeGroot as $N=\max(2,0.01*{n}_{\text{local}})$, where ${n}_{\text{local}}$ denotes the size of a local partition.
We find that DeGroot scales robustly with the number of agents in the ensemble and outperforms averaging by a large margin. Remarkably, the adaptive weighting of DeGroot is still effective for 128 agents on the abalone datasets, where each agent only has access to $30$ samples. 

\begin{figure}[h!]
    \centering
    \subfloat[Cpusmall -- ridge ($\lambda'=1e^{-5}$)]{\includegraphics[width=0.33\textwidth]{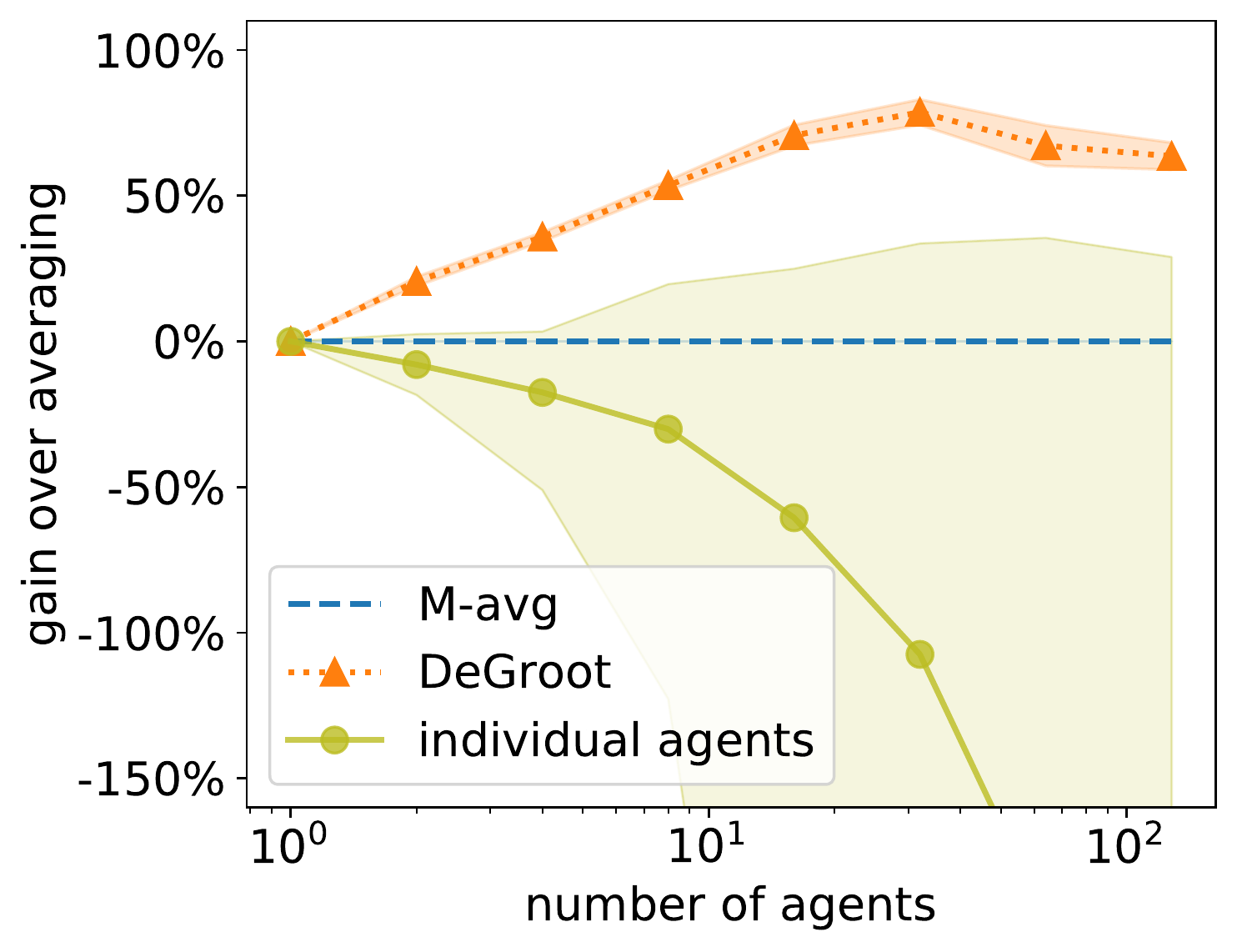}}
    \hspace{2cm}
    \subfloat[Abalone -- lasso ($\lambda'=1e^{-5}$)]{\includegraphics[width=0.33\textwidth]{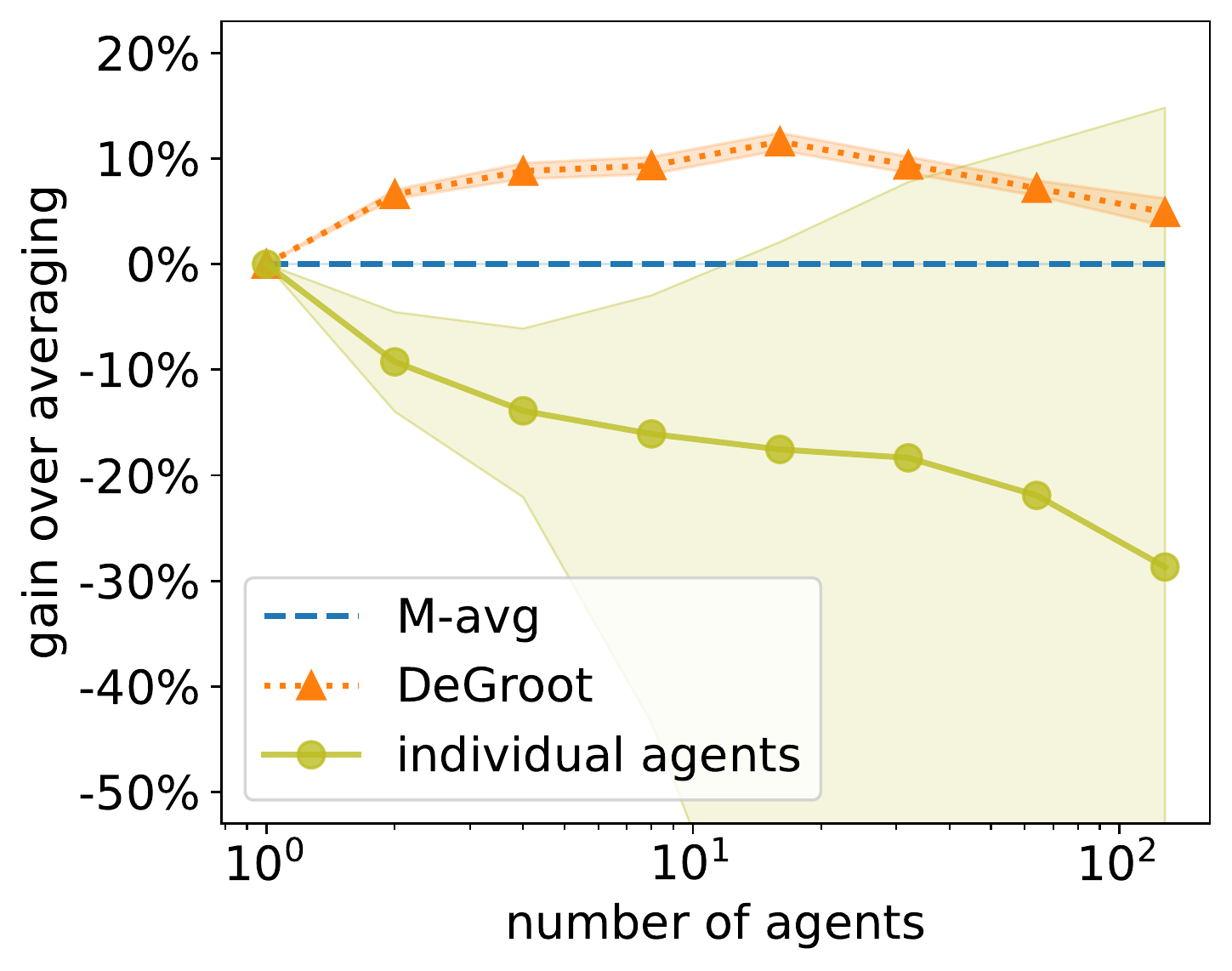}}
     \caption{\emph{(Scaling behavior of DeGroot)}. Relative gain of DeGroot aggregation over model averaging (M-avg) for different number of agents. Experiments performed for two different configurations. }
      \label{fig:scaling}
\end{figure}

\end{document}